\documentclass{article}

\PassOptionsToPackage{numbers, compress}{natbib}

\usepackage[final]{neurips_2023}

\usepackage[utf8]{inputenc} 
\usepackage[T1]{fontenc}    
\usepackage{hyperref}       
\usepackage{url}            
\usepackage{booktabs}       
\usepackage{amsfonts}       
\usepackage{nicefrac}       
\usepackage{xcolor}         

\usepackage{algorithm} 
\usepackage{adjustbox} 
\usepackage{algpseudocode}
\usepackage{amsmath}
\usepackage{amsthm}
\usepackage{multirow}
\usepackage{colortbl}
\usepackage{setspace}
\usepackage{subcaption}
\usepackage{graphicx}
\usepackage{wrapfig}

\usepackage{amsthm}

\newtheorem{theorem}{Theorem}

\newcommand*\samethanks[1][\value{footnote}]{\footnotemark[#1]}
\DeclareMathOperator*{\argmax}{argmax}
\title{Advancing Bayesian Optimization via \\ Learning Correlated Latent Space}

\author{%
  Seunghun Lee\thanks{equal contributions},\hspace{0.2cm}  Jaewon Chu\samethanks, \hspace{0.2cm} Sihyeon Kim\samethanks,\hspace{0.2cm}  Juyeon Ko,\hspace{0.2cm}  Hyunwoo J. Kim \thanks{Corresponding author}  \\
  Computer Science \& Engineering\\
  Korea University \\
  \texttt{\{llsshh319, allonsy07, sh\_bs15, juyon98, hyunwoojkim\}@korea.ac.kr}\\
}

\begin{document}

\maketitle

\newcommand{\hjk}[1]{{\color[rgb]{0.7,0.5,0.3}#1}}
\newcommand{\sk}[1]{{\color{black}#1}} 
\newcommand{\jyk}[1]{{\color[rgb]{0.5,0.2,0.1}#1}}
\newcommand{\shl}[1]{{\color[rgb]{0.2,0.4,0.2}#1}}

\newcommand{\orange}[1]{{\color{orange}#1}}
\newcommand{\blue}[1]{{\color{blue}#1}}
\newcommand{\red}[1]{{\color{red}#1}}

\newcommand{\Vc}{\mathcal{V}}

\newcommand{\omitme}[1]{}

\newcommand{\Dx} {D_X}
\newcommand{\Df} {D_f}
\newcommand{\Dz} {D_Z}
\newcommand{\Dy} {D_Y}

\newcommand{\G} {\mathcal{G} }
\newcommand{\Q} {\mathcal{Q} }
\newcommand{\V} {\mathcal{V} }
\newcommand{\E} {\mathcal{E} }
\newcommand{\Y} {\mathcal{Y} }
\newcommand{\A} {\mathcal{A} }
\newcommand{\TN} {\mathcal{TN} }

\newcommand{\invD}[1]{\left (D^{(#1)} \right )^{-1}}
\newcommand{\Ll}{\mathcal{L}}
\newcommand{\Uu}{\mathcal{U}}
\newcommand{\Ss}{\mathcal{S}}
\newcommand{\Ee}{\mathcal{E}}
\newcommand{\Vv}{\mathcal{V}}
\newcommand{\Gg}{\mathcal{G}}
\newcommand{\Ff}{\mathcal{F}}
\newcommand{\kcompG}{\mathbf{G}^{(K)}}
\newcommand{\nkcompG}{\mathbf{G}^{(NK)}}
\newcommand{\lcompG}{\mathbf{G}^{(L)}}
\newcommand{\onecompG}{\mathbf{G}^{(1)}}
\newcommand{\Cc}{\mathcal{C}}
\newcommand{\Mc}{\mathcal{M}}
\newcommand{\Lc}{\mathcal{L}}
\newcommand{\Tc}{\mathcal{T}}
\newcommand{\Oc}{\mathcal{O}}
\newcommand{\Rc}{\mathcal{R}}
\newcommand{\Rb}{\mathbb{R}}
\newcommand{\Pc}{\mathcal{P}}
\newcommand{\fb}{\mathbf{f}}
\newcommand{\Rm}{\mathrm{R}}
\newcommand{\CB}{\mathbf{C}}
\newcommand{\Xb}{\mathbf{X}}
\newcommand{\PP}{\mathbb{P}}
\newcommand{\pb}{\mathbf{p}}
\newcommand{\alphab}{\mathbb{\alpha}}

\newcommand{\RR}{\mathbb{R}}
\newcommand{\NN}{\mathbb{N}}
\newcommand{\Kcal}{\mathcal{K}}
\newcommand{\Rcal}{\mathcal{R}}
\newcommand{\Ncal}{\mathcal{N}}
\newcommand{\Scal}{\mathcal{S}}
\newcommand{\Qcal}{\mathcal{Q}}
\newcommand{\Xcal}{\mathcal{X}}
\newcommand{\Tcal}{\mathcal{T}}
\newcommand{\Dcal}{\mathcal{D}}

\newcommand{\argminU}{\mathop{\mathrm{argmin}}}

\def\eg{\emph{e.g}.}
\def\ie{\emph{i.e}.}

\def\Rbb{\mathbb{R}}

\def\Rb{\textbf{R}}
\def\Xc{\mathcal{X}}
\def\Yc{\mathcal{Y}}
\def\Gc{\mathcal{G}}
\def\Vc{\mathcal{V}}
\def\Ec{\mathcal{E}}
\def\Nc{\mathcal{N}}

\def\wxb{\boldsymbol{w}_x}
\def\wyb{\boldsymbol{w}_y}
\def\muxb{\boldsymbol{\mu}_x}
\def\muyb{\boldsymbol{\mu}_y}
\def\mub{\boldsymbol{\mu}}
\def\pib{\boldsymbol{\pi}}
\def\Sigmab{\boldsymbol{\Sigma}}
\def\phib{\boldsymbol{\phi}}
\def\Wab{\boldsymbol{W}_a}
\def\Wbb{\boldsymbol{W}_b}
\def\Wxb{\boldsymbol{W}_x}
\def\Wyb{\boldsymbol{W}_y}
\def\Mx{M_x}
\def\My{M_y}
\def\Pb{\boldsymbol{P}}
\def\Wbold{\boldsymbol{W}}
\def\lambdab{\boldsymbol{\lambda}}
\def\dist{\text{d}}
\def\Nc{\mathcal{N}}
\def\epsilonb{\boldsymbol{\epsilon}}
\newcommand{\Mcb}{\boldsymbol{\mathcal{M}}}

\newcommand{\defeq}{\mathrel{\mathop:}=}

\newcommand{\SPD}{\text{SPD}}
\newcommand{\VAR}{\text{VAR}}
\newcommand{\EXP}{\text{Exp}}
\newcommand{\LOG}{\text{Log}}
\newcommand{\EE}{\mathbb{E}}
\newcommand{\x}{\textbf{x}}
\newcommand{\M}{\mathcal{M}}
\newcommand{\Ac}{\mathcal{A}}
\newcommand{\Hc}{\mathcal{H}}
\newcommand{\tr}{\text{tr}}

\makeatletter
\newcount\my@repeat@count
\newcommand{\myrepeat}[2]{%
  \begingroup
  \my@repeat@count=\z@
  \@whilenum\my@repeat@count<#1\do{#2\advance\my@repeat@count\@ne}%
  \endgroup
}
\makeatother

\newcommand{\rulesep}{\unskip\ \vrule\ }

\newtheorem{innercustomgeneric}{\customgenericname}
\providecommand{\customgenericname}{}
\newcommand{\newcustomtheorem}[2]{%
  \newenvironment{#1}[1]
  {%
   \renewcommand\customgenericname{#2}%
   \renewcommand\theinnercustomgeneric{##1}%
   \innercustomgeneric
  }
  {\endinnercustomgeneric}
}

\newcommand{\sss}{\scriptsize}

\begin{abstract}
Bayesian optimization is a powerful method for optimizing black-box functions with limited function evaluations. 
Recent works have shown that optimization in a latent space through deep generative models such as variational autoencoders leads to effective and efficient Bayesian optimization for structured or discrete data.
However, as the optimization does not take place in the input space, it leads to an inherent gap that results in potentially suboptimal solutions.
To alleviate the discrepancy, we propose \textbf{Co}rrelated latent space \textbf{B}ayesian \textbf{O}ptimization~(\textbf{CoBO}), which focuses on learning correlated latent spaces characterized by a strong correlation between the distances in the latent space and the distances within the objective function.
Specifically, our method introduces Lipschitz regularization, loss weighting, and trust region recoordination to minimize the inherent gap around the promising areas.
We demonstrate the effectiveness of our approach on several optimization tasks in discrete data, such as molecule design and arithmetic expression fitting, and achieve high performance within a small budget. 
\end{abstract}
\section{Introduction}
Bayesian optimization~(BO) is a standard method for a wide range of science and engineering problems such as chemical design~\cite{korovina2020chembo,hernandez2017parallel,griffiths2020constrained,wang2022bayesian}, reinforcement learning~\cite{calandra2016bayesian}, and hyperparameter tuning~\cite{snoek2012practical}.
Relying on a surrogate model typically modeled with a Gaussian process~(GP), BO estimates the computationally expensive black-box objective function to solve the problem with a minimum number of function evaluations~\cite{brochu2010tutorial}.
While it is known as a powerful method on continuous domains~\cite{frazier2018tutorial,shahriari2015taking}, applying BO is often obstructed by structured or discrete data, as the objective values are in a complex combinatorial space~\cite{maus2022local}.
This challenge has motivated recent interest in latent space Bayesian optimization~(LBO) methods~\cite{kusner2017grammar,jin2018junction,tripp2020sample,grosnit2021high}, which aim to find solutions in low-dimensional, continuous embeddings of the input data.
By adopting deep generative models such as variational autoencoders~(VAEs)~\cite{kingma2013auto} to map the input space to the latent space, LBO has successfully addressed the difficulties of such optimization problems.

However, the fact that optimization is not directly conducted in the input space gives rise to inherent gaps, which may lead to failures in the optimization process.
First, we can think of the gap between the input space and the latent space.
Seminal works have been developed to address this gap, with generative models such as $\beta$-VAE~\cite{beta-vae} emphasizing the importance of controlling loss weights, while WGAN~\cite{arjovsky2017wasserstein} introduces improved regularization. 
Both aim to learn the improved latent space that better aligns with the input data distribution.
Second, considering the BO problems, an additional gap emerges between the proximity of solutions in the latent space and the similarity of their black-box objective function values.
This is observed in many prior works~\cite{deshwal2021combining, gomez2018automatic} that learn a latent space by minimizing only reconstruction errors without considering the surrogate model.
This often leads to suboptimal optimization results.
A recent study~\cite{maus2022local} has highlighted the significance of joint training between VAE and the surrogate model, yet it only implicitly encourages the latent space to align with the surrogate model.
This limitation is observed in Figure~\ref{subfig:Obj_scatter_Lipschitz_abl1}, where the latent space's landscape, with respect to objective values, remains highly non-smooth with joint training.

To this end, we propose our method \textbf{Co}rrelated latent space \textbf{B}ayesian \textbf{O}ptimization~(\textbf{CoBO}) to address the inherent gaps in LBO. 
First, we aim to minimize the gap between the latent space and the objective function by increasing the correlation between the distance of latent vectors and the differences in their objective values.
By calculating the lower bound of the correlation, we introduce two regularizations and demonstrate their effectiveness in enhancing the correlation.
Especially, one of these regularizations, called the Lipschitz regularization, encourages a smoother latent space, allowing for a more efficient optimization process (see Figure~\ref{subfig:Obj_scatter_Lipschitz_abl2}).
Moreover, we suggest loss weighting with respect to objective values of each input data point to particularly minimize the gap between the input space and the latent space around promising areas of high objective values.
Finally, we propose the concept of trust region recoordination to adjust the search space in line with the updated latent space.
We experimentally validate our method with qualitative and quantitative analyses on nine tasks using three benchmark datasets on molecule design and arithmetic expression fitting.

To summarize, our contributions are as follows:
\begin{itemize}
\item We propose Correlated latent space Bayesian Optimization~(CoBO) to bridge the inherent gap in latent Bayesian optimization.
\item We introduce two regularizations to align the latent space with the black-box objective function based on increasing the lower bound of the correlation between the Euclidean distance of latent vectors and the distance of their corresponding objective values.
\item We present a loss weighting scheme based on the objective values of input points, aiming to close the gap between the input space and the latent space focused on promising areas.
\item We demonstrate extensive experimental results and analyses on nine tasks using three benchmark datasets on molecule design and arithmetic expression fitting and achieve state-of-the-art in all nine tasks.
\end{itemize}

\section{Methods}
In this section, we describe the main contributions of our method.
Section~\ref{sec:prelim} introduces several preliminaries on Bayesian optimization. 
In Section~\ref{sec:3.2}, we propose two regularizations to align the latent space with the black-box objective function.
In Section~\ref{sec:3.3}, we describe our loss weighting scheme with the objective values.
Lastly, in Section~\ref{sec:3.4}, we explain the overall architecture of our method.
\subsection{Preliminaries}
\label{sec:prelim}
\paragraph{Bayesian optimization.}
Bayesian optimization (BO)~\cite{snoek2012practical, frazier2018tutorial, shahriari2015taking} is a classical, sample-efficient optimization method that aims to solve the problem
\begin{equation}
    \mathbf{x}^* = \underset{\mathbf{x} \in \mathcal{X}}{\arg\max} f(\mathbf{x}),
\end{equation}
where $\mathcal{X}$ is a feasible set and $f$ is a black-box objective function. 
Since the function evaluation is assumed to be expensive, BO constructs a probabilistic model of the black-box objective function.
There are two main components of BO, first is a surrogate model $g$ that provides posterior probability distribution over $f(\mathbf{x})$ conditioned on observed dataset $\mathcal{D} = \{(\mathbf{x}_i, y_i)\}^n_{i=1}$ based prior over objective function.
Second is an acquisition function $\alpha$ for deciding the most promising next query point $\mathbf{x}_{i+1}$ based on the posterior distribution over $f(\mathbf{x})$. 
BO is a well-established method, however, applying BO to high-dimensional data can be challenging due to the exponential growth of the search space.
To alleviate the challenge, recent approaches~\cite{maus2022local, eriksson2019scalable} restrict the search space to a hyper-rectangular trust region centered on the current optimal input data point.
In this paper, we adopt this trust-region-based BO for handling high-dimensional search space.

\paragraph{Latent space Bayesian optimization.}
BO over structured or discrete input space $\mathcal{X}$ is particularly challenging, as a search space over the objective function becomes a large combinatorial one.
In an effort to reduce a large combinatorial search space to continuous space, latent space Bayesian optimization (LBO)~\cite{maus2022local, tripp2020sample, grosnit2021high, gomez2018automatic, stanton2022accelerating,  deshwal2021combining} suggests BO over continuous latent space $\mathcal{Z}$.
A pretrained VAE = $\{q_\phi, p_\theta \}$~\cite{kingma2013auto} is commonly used as the mapping function, where the latent space is learned to follow the prior distribution (Gaussian distribution).
Given a pretrained VAE, an encoder $q_\phi : \mathcal{X} \mapsto \mathcal{Z}$ maps the input $\mathbf{x}_i$ to the latent vector $\mathbf{z}_i$ and the surrogate model $g$ takes $\mathbf{z}_i$ as the input.
After the acquisition function $\alpha$ suggests the next latent query point $\mathbf{z}_{i+1}$, a decoder $p_\theta : \mathcal{Z} \mapsto \mathcal{X}$ reconstructs $\mathbf{z}_{i+1}$ to $\mathbf{x}_{i+1}$, so that it can be evaluated by the black-box objective function, \ie, $f(\mathbf{x}_{i+1})$.

LBO is a promising optimization approach for discrete or structured inputs, yet, there are two main gaps to be considered. 
Firstly, there is a gap between the input space and the latent space. Our focus is on narrowing the gap, especially within the promising areas, \ie, samples with high objective function values. 
Secondly, a gap exists between the proximity of solutions within the latent space and the similarity of their corresponding objective function values. 
This arises because the objective value originates from a black-box function in the discrete input space $\mathcal{X}$, distinct from the latent space $\mathcal{Z}$ where our surrogate model $g$ is learned. 
In the previous work,~\cite{maus2022local} has suggested closing the gap by jointly optimizing VAE and the surrogate GP model to align the latent space with current top-$k$ samples.

Here, we propose CoBO that explicitly addresses those two gaps by training a latent space with Lipschitz regularization, which increases the correlation between the distance of latent samples and the distance of objective values, and loss weighting with objective values to focus on relatively important search space.
\subsection{Aligning the latent space with the objective function} 
\label{sec:3.2}
Our primary goal is to align the latent space $\mathcal{Z}$ with the black-box objective function $f$, which can be achieved by increasing the correlation between the Euclidean distance of latent vectors and the differences in their corresponding objective values, \textit{i.e.}, $ \text{Corr}(||\mathbf{z}_1 - \mathbf{z}_2||_2, |y_1-y_2|)$.
Assuming the objective function $f$ is an $L$-Lipschitz continuous function, we can establish a lower bound of  $\text{Corr}(||\mathbf{z}_1 - \mathbf{z}_2||_2, |y_1-y_2|)$.
In general, if the function $f$ is $L$-Lipschitz continuous, it is defined as
\begin{equation}
    \forall \mathbf{z}_1, \mathbf{z}_2 \in \mathbb{R}^n, \quad d_Y(f(\mathbf{z}_1), f(\mathbf{z}_2)) \leq Ld_Z(\mathbf{z}_1, \mathbf{z}_2),
\end{equation}
where $d_Z$ and $d_Y$ are distance metrics in their respective spaces for $\mathbf{z}$ and $y$. 
Then, the lower bound of $\text{Corr}(||\mathbf{z}_1 - \mathbf{z}_2||_2, |y_1-y_2|)$ can be obtained as Theorem \ref{theorem:corr}.

\begin{theorem}
\label{theorem:corr}
Let $\Dz = d_Z(Z_1,Z_2)$ and $\Dy = d_Y(f(Z_1),f(Z_2))$ be random variables where $Z_1, Z_2$ are i.i.d. random variables, $f$ is an $L$-Lipschitz continuous function, and $d_Z, d_Y$ are distance functions.
Then, the correlation between $\Dz$ and $\Dy$ is lower bounded as
\[ \Dy \leq L\Dz \Rightarrow \text{Corr}_{\Dz, \Dy} \geq \frac{\frac{1}{L} (\sigma^2_{\Dy} + \mu^2_{\Dy}) - L\mu^2_{\Dz} }{\sqrt{\sigma^2_{\Dz} \sigma^2_{\Dy}}}, \]
where $\mu_{\Dz}$, $\sigma^2_{\Dz}$, $\mu_{\Dy}$, and $\sigma^2_{\Dy}$ are the mean and variance of $\Dz$ and $\Dy$ respectively.
\end{theorem}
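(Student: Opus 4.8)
The plan is to work directly from the definition of the Pearson correlation in terms of the covariance, namely $\text{Corr}_{\Dz,\Dy} = \frac{\EE[\Dz\Dy] - \mu_{\Dz}\mu_{\Dy}}{\sqrt{\sigma^2_{\Dz}\sigma^2_{\Dy}}}$, and then to lower-bound the numerator term by term. Since the denominator $\sqrt{\sigma^2_{\Dz}\sigma^2_{\Dy}}$ is a fixed positive quantity (we implicitly assume $\sigma^2_{\Dz},\sigma^2_{\Dy}>0$, which is needed for the correlation to be defined at all, and $L>0$), it suffices to prove $\EE[\Dz\Dy] - \mu_{\Dz}\mu_{\Dy} \geq \tfrac{1}{L}(\sigma^2_{\Dy} + \mu^2_{\Dy}) - L\mu^2_{\Dz}$ and then divide.

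First I would handle the cross-moment $\EE[\Dz\Dy]$. Because $\Dz$ and $\Dy$ are distances, they are nonnegative, and the Lipschitz hypothesis is exactly the pointwise bound $\Dy \leq L\Dz$, i.e. $\Dz \geq \Dy/L$. Multiplying this inequality by the nonnegative random variable $\Dy$ preserves its direction, giving $\Dz\Dy \geq \Dy^2/L$ almost surely; taking expectations and using $\EE[\Dy^2] = \sigma^2_{\Dy} + \mu^2_{\Dy}$ yields $\EE[\Dz\Dy] \geq \tfrac{1}{L}(\sigma^2_{\Dy} + \mu^2_{\Dy})$.

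Next I would bound the product of the means. Taking expectations in $\Dy \leq L\Dz$ gives $\mu_{\Dy} \leq L\mu_{\Dz}$, and since both means are nonnegative this implies $\mu_{\Dz}\mu_{\Dy} \leq L\mu^2_{\Dz}$, hence $-\mu_{\Dz}\mu_{\Dy} \geq -L\mu^2_{\Dz}$. Adding this to the bound on $\EE[\Dz\Dy]$ from the previous step produces the claimed inequality for the numerator, and dividing through by $\sqrt{\sigma^2_{\Dz}\sigma^2_{\Dy}}$ completes the argument.

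The computations are elementary, so there is no deep obstacle; the step I would be most careful about is the correct use of nonnegativity of $\Dz$ and $\Dy$ when multiplying $\Dz \geq \Dy/L$ by $\Dy$ and when passing from $\mu_{\Dy} \leq L\mu_{\Dz}$ to $\mu_{\Dz}\mu_{\Dy} \leq L\mu^2_{\Dz}$, since without nonnegativity the inequalities could reverse. I would also flag explicitly the standing assumptions $\sigma^2_{\Dz},\sigma^2_{\Dy}>0$ and $L>0$ that make the statement well posed.
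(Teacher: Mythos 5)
Your proposal is correct and follows essentially the same route as the paper's proof: expand the Pearson correlation via the covariance, bound $\mathbb{E}[\Dz \Dy]$ from below by $\frac{1}{L}\mathbb{E}[\Dy^2]$ using $\Dz \geq \Dy/L$, and bound $\mu_{\Dz}\mu_{\Dy}$ from above by $L\mu_{\Dz}^2$ using $\Dy \leq L\Dz$. Your explicit attention to the nonnegativity of $\Dz$ and $\Dy$ (and to $\sigma^2_{\Dz},\sigma^2_{\Dy}>0$, $L>0$) is a point the paper leaves implicit, but the argument is the same.
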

Theorem~\ref{theorem:corr} implies that under the assumption of the $L$-Lipschitz continuity of $f$, we can increase the lower bound of $\text{Corr}(||\mathbf{z}_1 - \mathbf{z}_2||_2, |y_1-y_2|)$ by reducing Lipschitz constant $L$ while $\mu^2_{\Dz}$, $\sigma^2_{\Dz}$, $\mu^2_{\Dy}$, and $\sigma^2_{\Dy}$ remain as constants.
Based on Theorem~\ref{theorem:corr}, we propose two regularizations.
The first one is Lipschitz regularization, which encourages a \textit{smooth} latent space $\mathcal{Z}$ w.r.t. the objective function $f$.
\begin{equation}
    \mathcal{L}_{\text{Lip}} = \sum_{i,j\le N} \max \left (0, \frac{|y_i - y_j|}{||\mathbf{z}_i - \mathbf{z}_j||_2} - L \right ),
    \label{eq:lip}
\end{equation}
where $N$ is the number of training data points.
Here, we set the Lipschitz constant $L$ as the median of all possible gradients of slopes.
By penalizing slopes with larger gradients than an adaptively adjusted $L$, we encourage a decrease in $L$ itself, leading to learning a correlated latent space.

Next, it is beneficial to keep $\mu^2_{\Dz}$, $\sigma^2_{\Dz}$, $\mu^2_{\Dy}$ and $\sigma^2_{\Dy}$ as constants.
Given that $\mu^2_{\Dy}$ and $\sigma^2_{\Dy}$ are function-specific constants, where the black-box function is unchanged throughout the optimization, we can treat them as fixed values.
Then, we want to constrain $\mu^2_{\Dz}$ as a constant with the second regularization $\mathcal{L}_{\text{z}}$, by penalizing the average distance between the latent vectors $\mathbf{z}$ to be a constant $c$:
\begin{equation}
    \mathcal{L}_{\text{z}} = \left|\left(\frac{1}{N^2}\sum_{i,j\le N} ||\mathbf{z}_i - \mathbf{z}_j||_2\right) - c\right|.
    \label{eq:Z-regu}    
\end{equation}
We set $c$ as the expected Euclidean norm between two standard normal distributions, which is the prior of the variational autoencoder. That is the mean of the noncentral chi distribution \cite{lawrence2021moments} which is sum of squared independent normal random variables:
\begin{equation}
     c = \mathbb{E}\left[ \sqrt{\Sigma^n_i (U_i -V_i)^2} \right] = \mathbb{E}\left[ C \right] = \frac{2\Gamma(\frac{k+1}{2})}{\Gamma(\frac{k}{2})} , U_i,V_i\sim \mathcal{N}(0,1), C \sim NC_{\chi_k},
\end{equation}
\begin{equation}
     \sqrt{\Sigma^n_i (U_i -V_i)^2} = \sqrt{\Sigma^n_i{W_i^2}} = C, \ W_i\sim \mathcal{N}\left(0,\sqrt{2}^2\right),
\end{equation}
where $\Gamma(\cdot)$ denotes the gamma function, C denotes the random variable with noncentral chi distribution $NC_{\chi_k}$ and $k$ denotes the degrees of freedom which is the same value as dimension $n$ of the latent vector. Then $c$ is dependent only on the dimension of the latent vector, $\mathbf{z}\in\mathbb{R}^n$.
For $\sigma^2_{\Dz}$, preliminary observations indicate that it stays in a reasonable range as long as $\mathcal{L}_{\text{Lip}}$ is not overly penalized.
Thus, we safely conclude that there is no need to explicitly constrain $\sigma^2_{\Dz}$.
We refer to the supplement for further analysis and the proof of Theorem~\ref{theorem:corr}.

\subsection{Loss weighting with objective values}
\label{sec:3.3}
Our focus now shifts to addressing the gap between the input space $\mathcal{X}$ and the latent space $\mathcal{Z}$ for LBO.
Especially, we aim to minimize the gap in promising areas that offer better optimization opportunities, \textit{i.e.}, significant points with high objective values.
To achieve this, we prioritize input data points based on their respective objective values by weighting the reconstruction loss term. 
Following~\cite{brookes2018design}, we utilize the cumulative density function of the Gaussian distribution for the weighting. 
Specifically, the weighting function w.r.t. objective value $y$ is:
\begin{equation}
    \lambda(y) = P(Y>y_q),
\end{equation}
with $Y \sim \mathcal{N}(y, \sigma^2)$, where $y_q$ represents a specific quantile of the distribution of $Y$, and hyperparameter $\sigma$ denotes the standard deviation of $Y$.
The weighted reconstruction loss is as follows:
\begin{equation}
    {\mathcal L}_{\text{recon\_W}}  = \lambda(y){\mathcal L}_{\text{recon}} = -\lambda(y)\mathbb E_{\mathbf{z} \sim q_{\phi}(\mathbf{z}| \mathbf{x})}[\log p_\theta (\mathbf{x}|\mathbf{z})].\\
\end{equation}
Moreover, we also apply the weighting scheme to the Lipschitz regularization term to promote a smoother latent space when the objective value is higher.
The weighted Lipschitz regularization is defined with the geometric mean of the weights of two input data points:
\begin{equation}
    \mathcal{L}_{\text{Lip\_W}} = \sum_{i,j\le N} \sqrt{\lambda(y_i)\lambda(y_j)}\max\left(0, \frac{|y_i - y_j|}{||\mathbf{z}_i - \mathbf{z}_j||_2} - L\right).
    \label{eq:Weighted Lip}
\end{equation}
\subsection{Overall architecture of CoBO}
\label{sec:3.4}
In this section, we explain the overall architecture of CoBO.
We first introduce the training schema of latent space in CoBO to encourage a high correlation between the distance in the latent space and the distance within the objective function.
Next, we describe updating strategy of the surrogate model for modeling the black-box function and further present a generating procedure of the next candidate inputs for the black-box objective function through the acquisition function in the trust region.
The overall procedure of our CoBO is in Algorithm~\ref{algo:SLBO}.

\paragraph{Learning the latent space.}
Our method learns the latent space by optimizing the encoder $q_\phi$ and decoder $p_\theta$ of the pretrained VAE and updating the surrogate model in the latent space with our final loss:
\begin{equation}
    \mathcal{L}_{\text{CoBO}} =  
    {\mathcal L}_{\text{Lip\_W}}+
    \mathcal{L}_{\text{z}}+
    {\mathcal L}_{\text{recon\_W}}+
    {\mathcal L_{\text{KL}}}+
    \mathcal{L}_{\text{surr}},
    \label{eq:SLBO_loss}
\end{equation}
\begin{equation}
     {\mathcal L}_{\text{KL}} = \text{KL}(q_{\phi}(\mathbf{z}| \mathbf{x}) || p_{\theta}(\mathbf{z})), \\
\end{equation}
where $\mathcal{L}_{\text{Lip\_W}}$ and $\mathcal{L}_{\text{z}}$ is the regularization term in the latent space at Section \ref{sec:3.2} and \ref{sec:3.3}, $\mathcal{L}_{\text{recon\_W}}$ is the weighted reconstruction loss term, $\mathcal{L}_{\text{KL}}$ is the KL divergence between the latent space distribution and the prior, and $\mathcal{L}_{\text{surr}}$ is the loss for optimizing the surrogate model. We adopt the joint training scheme, training the surrogate model and the encoder $q_{\phi}$ of the VAE model jointly \cite{maus2022local}. 
Under computational considerations, we retrain a latent space after $N_{\text{fail}}$ accumulated failure of updating the optimal objective value.

\paragraph{Updating the surrogate model.}

After jointly optimizing the latent space and the surrogate model, we freeze the parameter of VAE and train our surrogate model. 
Note that this update is executed only after consecutive failures of updating optimal value, we also train the surrogate model in every iteration.
As exact Gaussian process~(GP), \textit{i.e.}, $f(\mathbf{x}) \sim \mathcal{GP}(m(\mathbf{x}), k(\mathbf{x}, \mathbf{x'}))$, where $m$ is a mean function and $k$ is a covariance function, is infeasible to handle large datasets due to cubic computational complexity $O(N^3)$ for $N$ data points, we employ sparse GP~\cite{snelson2005sparse} as a surrogate model which is computationally efficient via inducing point method.
To alleviate cubic complexity, sparse GP approximates black-box function with $M \ll N$ pseudo-training samples called `inducing points' that reduce complexity to $O(MN^2)$.
We select the most widely used RBF kernel as a sparse GP kernel function. Finally, we adopted deep kernel learning (DKL)~\cite{wilson2016deep} in conjunction with sparse GP for our final surrogate model.

\paragraph{Generating candidates through acquisition function.}
\begin{wrapfigure}{r}{0.5\textwidth}
  \vspace{-20pt}
  \hspace{-20pt}
  \begin{center}
    \includegraphics[width=0.48\textwidth]{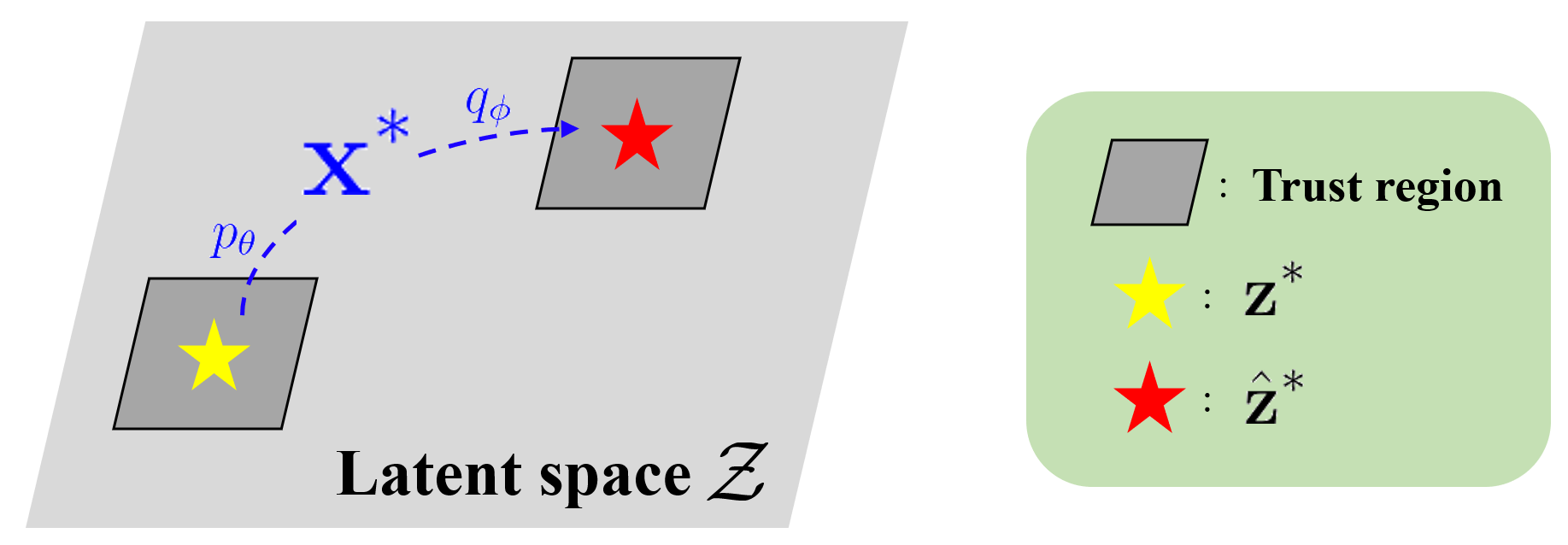}
    \caption{\textbf{Trust region recoordination.}}
    \label{fig:TR_recoordinate}       
  \end{center}
  \vspace{-20pt}
  \hspace{-20pt}
\end{wrapfigure}
Candidate samples for the acquisition function are determined by random points in a trust region centered on the current optimal value. 
We take a simple and powerful method, Thompson sampling as an acquisition function within the context of Bayesian optimization. The surrogate model acts as the prior belief about our objective function. Thompson sampling uses this model to draw samples, leveraging its uncertainty to decide the next point to evaluate.
In detail, we first select candidate samples in the trust region and score each candidate based on the posterior of the surrogate model to get the most promising values.
Also, we recoordinate the center of the trust region to $\hat{\mathbf{z}}^*$, which is obtained by passing the current optimal latent vector $\mathbf{z}^*$ into updated VAE, \ie, $q_\phi(p_\theta(\mathbf{z}^*))$, as shown in Figure \ref{fig:TR_recoordinate}. 
We empirically showed that trust region recoordination helps to find a better solution within a limited budget (see Table~\ref{table:Main_table2}).
Following~\cite{eriksson2019scalable}, the base side length of the trust region is halved after consecutive failures of updating optimal objective value and doubled after consecutive successes.

\begin{algorithm}[t]
\caption{Correlated Bayesian Optimization (CoBO)}
\label{algo:SLBO}
\textbf{Input:} Pretrained VAE encoder $q_{\phi}$, decoder $p_{\theta}$, black-box function $f$, surrogate model $g$, acquisition function $\alpha$, previously evaluated dataset 
$D_{0}=\{(\mathbf{x}_i, y_i, \mathbf{z}_i) \}_{i=1}^n$, 
oracle budget $T$, latent update interval $N_{\text{fail}}$, batch size $N_b$, loss for surrogate model $\mathcal{L}_{\text{surr}}$, proposed loss for joint training $\mathcal{L}_{\text{CoBO}}$
\begin{algorithmic}[1]
\State $D \leftarrow D_0$
\State $n_{\text{fail}} \leftarrow 0$
\For{$t = 1,2,...,T$}
\State $D' \leftarrow D[-N_b:] \cup \text{top-}k(D)$
\If{$n_{\text{fail}} \ge N_{\text{fail}}$}
    \State $n_{\text{fail}} \leftarrow 0$
    \State Train $q_{\phi}$, $p_{\theta}$, $g$ with $\mathcal{L}_{\text{CoBO}}$, $D'$ \hfill \emph{$\rhd$ Eq. \ref{eq:SLBO_loss}}
    \State $Z \leftarrow \{q_\phi(\mathbf{x}_i) | (\mathbf{x}_i, y_i, \mathbf{z}_i) \in D'\}$
    \State $D \leftarrow D \cup \{(p_\theta(\mathbf{z}_i), f(p_\theta(\mathbf{z}_i)), \mathbf{z}_i) | \mathbf{z}_i \in Z\}$
\EndIf
\State Train $g$ with $\mathcal{L}_{\text{surr}}$, $D'$ if $t \ne 1$ else $D_0$
\State $(\mathbf{x}^*, y^*, \mathbf{z}^*) = \arg\max_{(\mathbf{x},y,\mathbf{z})\in D} y $
\State $\mathbf{\hat{z}}^* \leftarrow q_\phi(p_\theta(\mathbf{z}^*))$ \hfill \emph{$\rhd$ trust region recoordination}
\State Get a candidate set $Z_{\text{cand}}$ with random points in the trust region around $\mathbf{\hat{z}}^*$
\State $\mathbf{z}_\text{next} \leftarrow \argmax_{\mathbf{z} \in Z_\text{cand}} \alpha(\mathbf{z})$
\State \textbf{if} $f(p_\theta(\mathbf{z}_\text{next})) \le y^\ast$ \textbf{then} $n_{\text{fail}} \leftarrow n_{\text{fail}}+1$
\State $D \leftarrow D \cup \{(p_\theta(\mathbf{z}_\text{next}), f(p_\theta(\mathbf{z}_\text{next})), \mathbf{z}_\text{next})\}$
\EndFor
\State \Return $\mathbf{x}^*$
\end{algorithmic}
\end{algorithm}
\section{Experiments}
In this section, we demonstrate the effectiveness and efficiency of CoBO through various optimization benchmark tasks. We first introduce tasks and baselines. Then, in Section~\ref{sec:main_exp}, we present evaluations of our method and the baselines for each task. In Section~\ref{sec:ablation}, we conduct an ablation study on the components of our approach. Finally, in Section~\ref{sec:analysis_loss}, we provide qualitative analyses on the effects of our suggested regularizations and the necessity of z regularization.
\paragraph{Tasks.}
We evaluate CoBO to nine tasks on a discrete space in three different Bayesian optimization benchmarks, which consist of arithmetic expression fitting tasks, molecule design tasks named dopamine receptor D3~(DRD3) in Therapeutics Data Commons~(TDC)~\cite{huang2021therapeutics} and Guacamol benchmarks~\cite{brown2019guacamol}.
The arithmetic expression task is generating polynomial expressions that are close to specific target expressions~(\textit{e.g.}, $1/3+x+\sin(x\times x)$)~\cite{maus2022local, kusner2017grammar, tripp2020sample, grosnit2021high, ahn2020guiding}, we set the number of initialization points $|D_0|$ to 40k, and max oracle calls to 500.
In Guacamol benchmark, we select seven challenging tasks to achieve high objective value, Median molecules 2, Zaleplon MPO, Perindopril MPO, Osimertinib MPO, Ranolazine MPO, Aripiprazole similarity, and Valsartan SMART. The results for the last three tasks are in the supplement.
The goal of each task is to find molecules that have the most required properties.
For every task of Guacamol benchmark, we set the number of initialization points to 10k, and max oracle calls to 70k.
DRD3 task in the TDC benchmark aims to find molecules with the largest docking scores to a target protein.
In DRD3, the number of initialization points is set to 100, and the number of oracle calls is set to 3k.
We use SELFIES VAE~\cite{maus2022local} in Chemical design, and Grammar VAE~\cite{kusner2017grammar} in arithmetic expression.

\paragraph{Baselines.}
We compare our methods with four BO baselines: LOL-BO~\cite{maus2022local}, W-LBO~\cite{tripp2020sample}, TuRBO~\cite{eriksson2019scalable} and LS-BO.
LOL-BO proposed the joint loss to close the gap between the latent space and the discrete input space. 
Additionally, W-LBO weighted data points to emphasize important samples with regard to their objective values.
To handle discrete and structured data, we leverage TuRBO in latent space, TuRBO-$L$ through encoder $q_{\phi}$ of pretrained frozen VAE and reconstruct it by decoder $p_{\theta}$ to be evaluated by the objective function. 
For more details about TuRBO-$L$, see \cite{maus2022local}.
In our LS-BO approach, we adopt the standard LBO methodology. Specifically, we use a VAE with pre-trained parameters that are frozen. For sampling, the candidates are sampled from a standard normal Gaussian distribution, $\mathcal{N}(0,I)$.

\subsection{Results on diverse benchmark tasks \label{sec:main_exp}}
\begin{figure}[t]
\centering
    \hspace{40pt}
  \hspace{-0.12\linewidth}
    \begin{subfigure}[h]{0.4\linewidth}
        \centering
        \includegraphics[width=\linewidth]{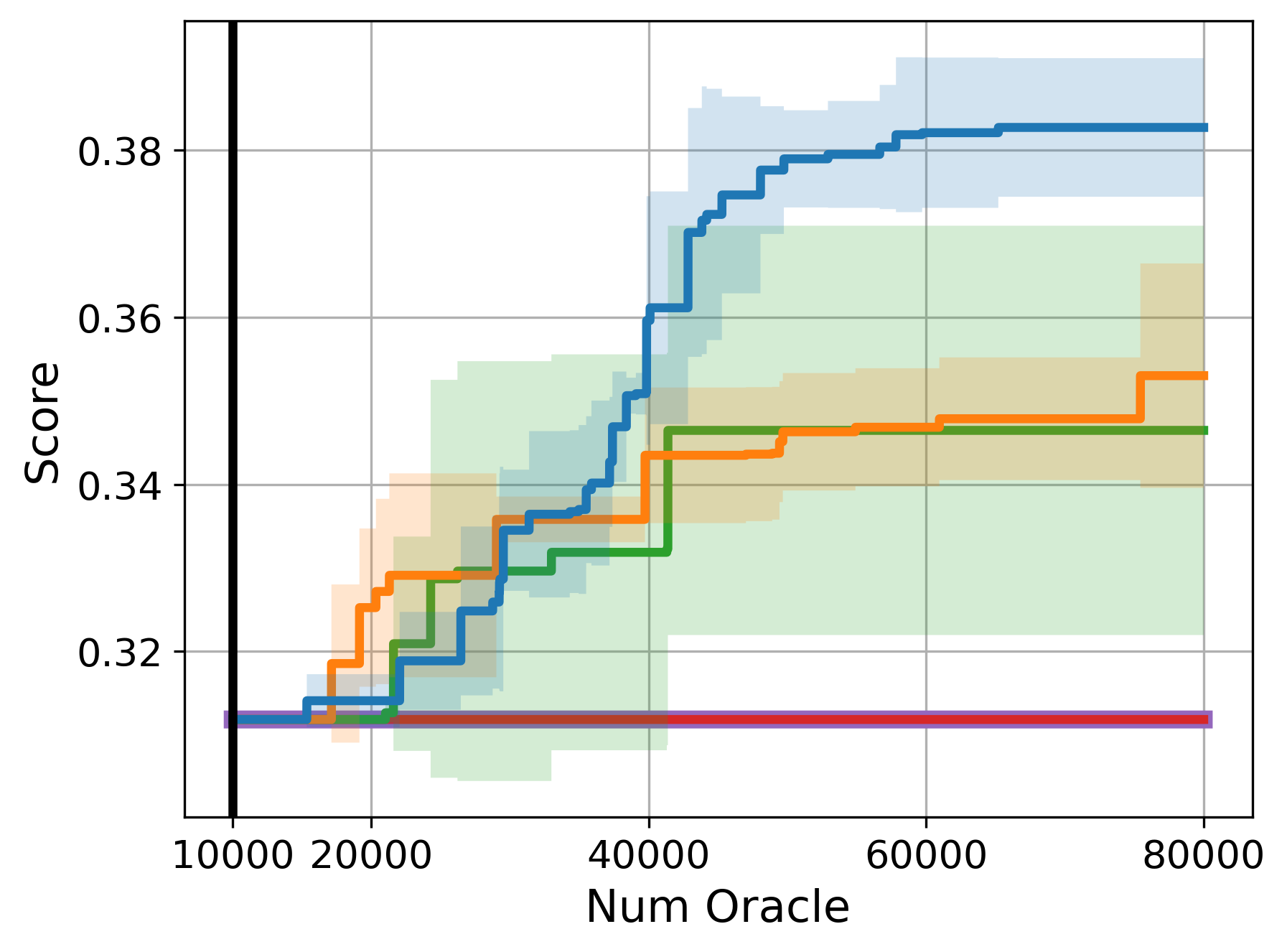}
        \caption{Median molecules 2 (med2)}
        \label{subfig:med2}
    \end{subfigure}
    \vspace{2mm}
  \hspace{0\linewidth}
    \begin{subfigure}[h]{0.4\linewidth}
        \centering
        \includegraphics[width=\linewidth]{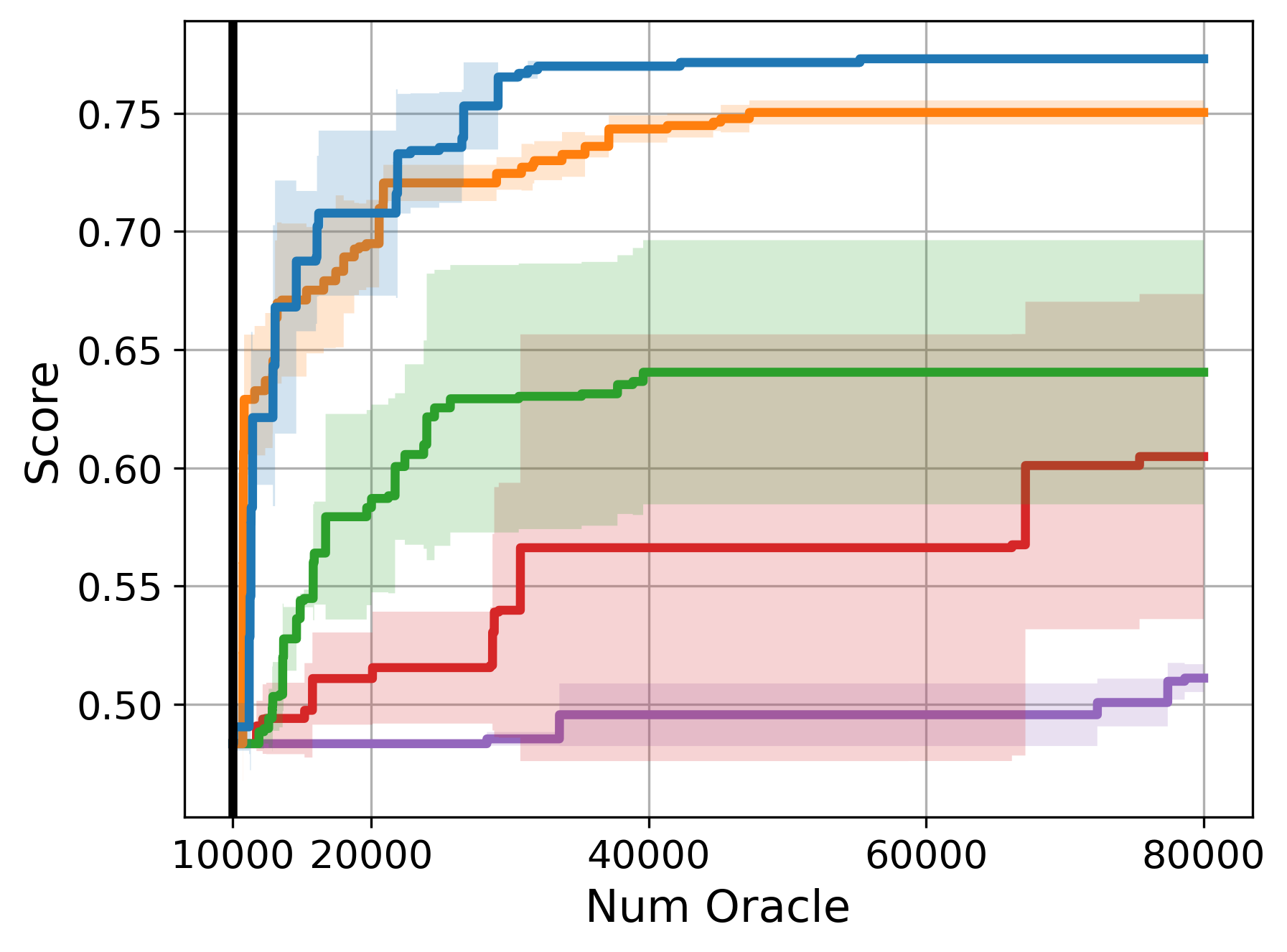}
        \caption{Zaleplon MPO (zale)}
        \label{subfig:zale}
    \end{subfigure}    
  \hspace{0\linewidth}
    \begin{subfigure}[h]{0.4\linewidth}
        \centering
        \includegraphics[width=\linewidth]{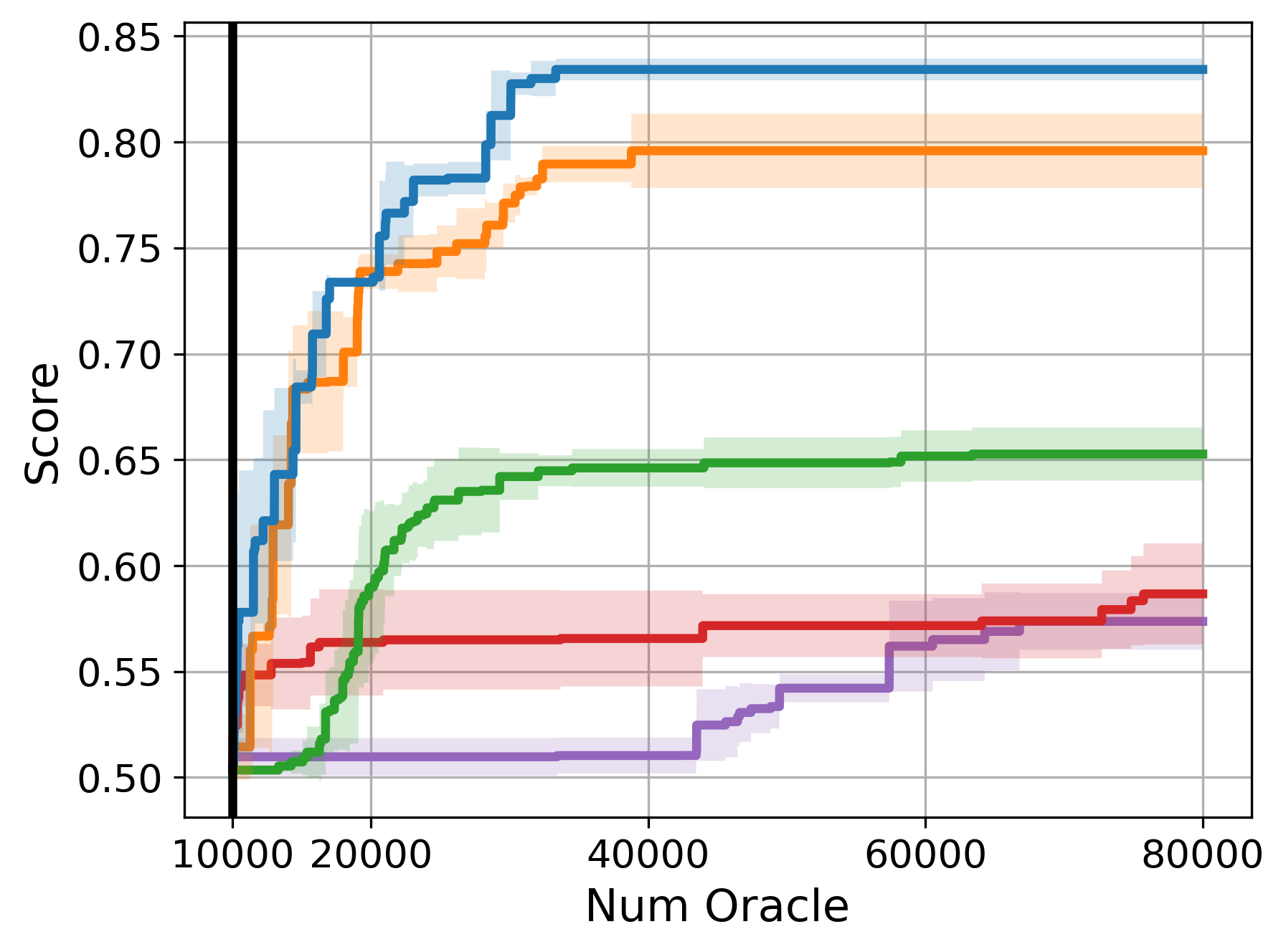}
        \caption{Perindopril MPO (pdop)}
        \label{subfig:pdop}
    \vspace{5pt}
    \end{subfigure}
  \hspace{0\linewidth}
    \begin{subfigure}[h]{0.4\linewidth}
        \centering
        \includegraphics[width=\linewidth]{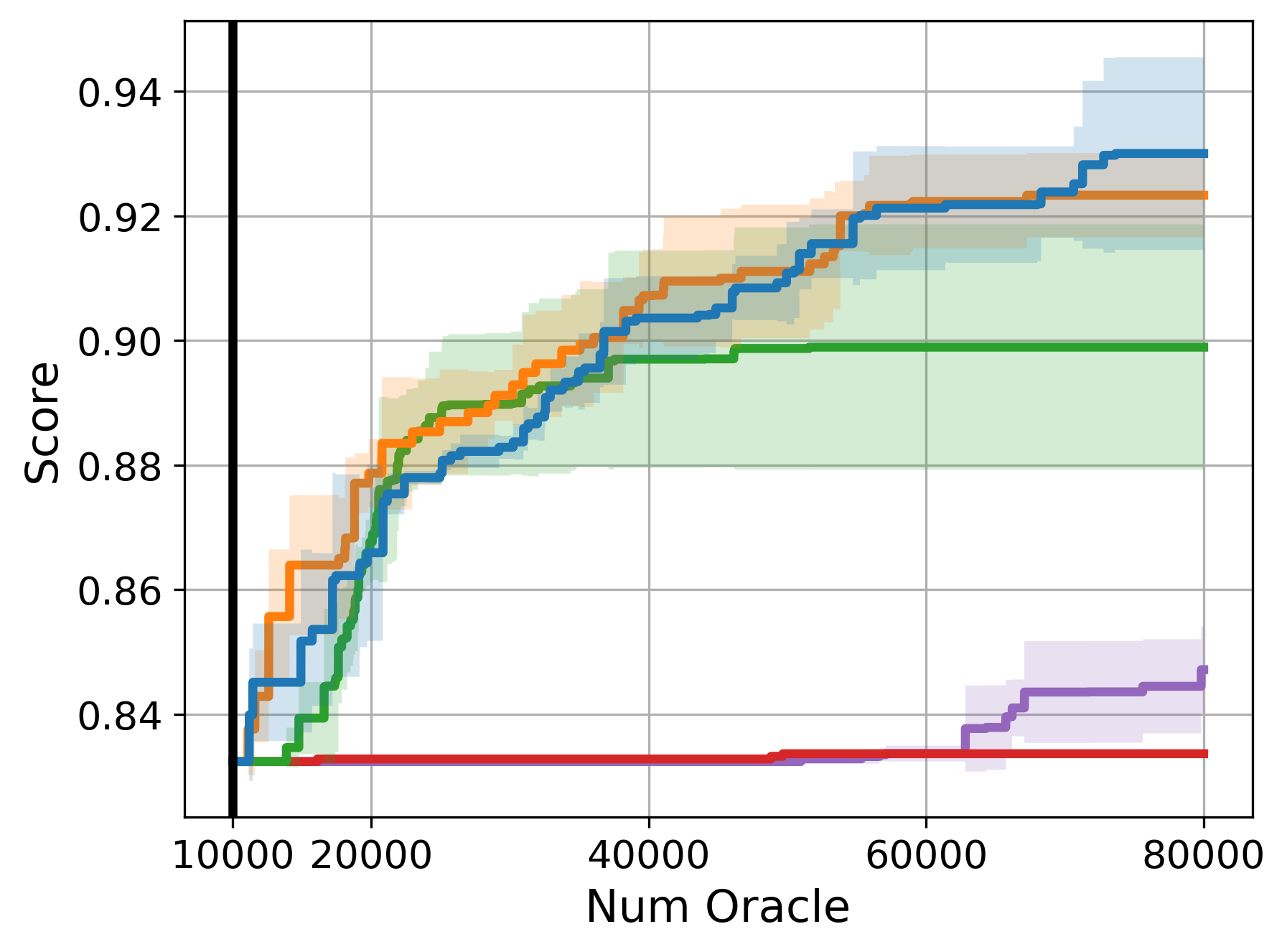}
        \caption{Osimertinib MPO (osmb)}
        \label{subfig:osmb}
        \vspace{5pt}
    \end{subfigure}
\includegraphics[width=\linewidth]{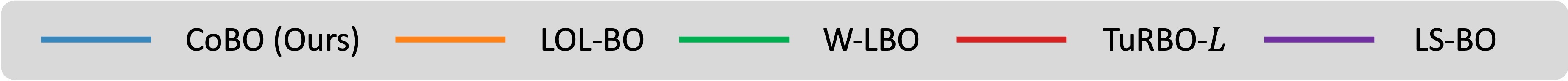}
    \caption{\textbf{Optimization results with four different tasks on the Guacamol benchmark.} The lines and range are the mean and standard deviation of three repetitions with the same parameters. }
    \label{fig:Main_exp}
\end{figure}

\begin{figure}[t]
\centering
    \hspace{40pt}
  \hspace{-0.1\linewidth}
    \begin{subfigure}[h]{0.4\linewidth}
        \centering
        \includegraphics[width=\linewidth]{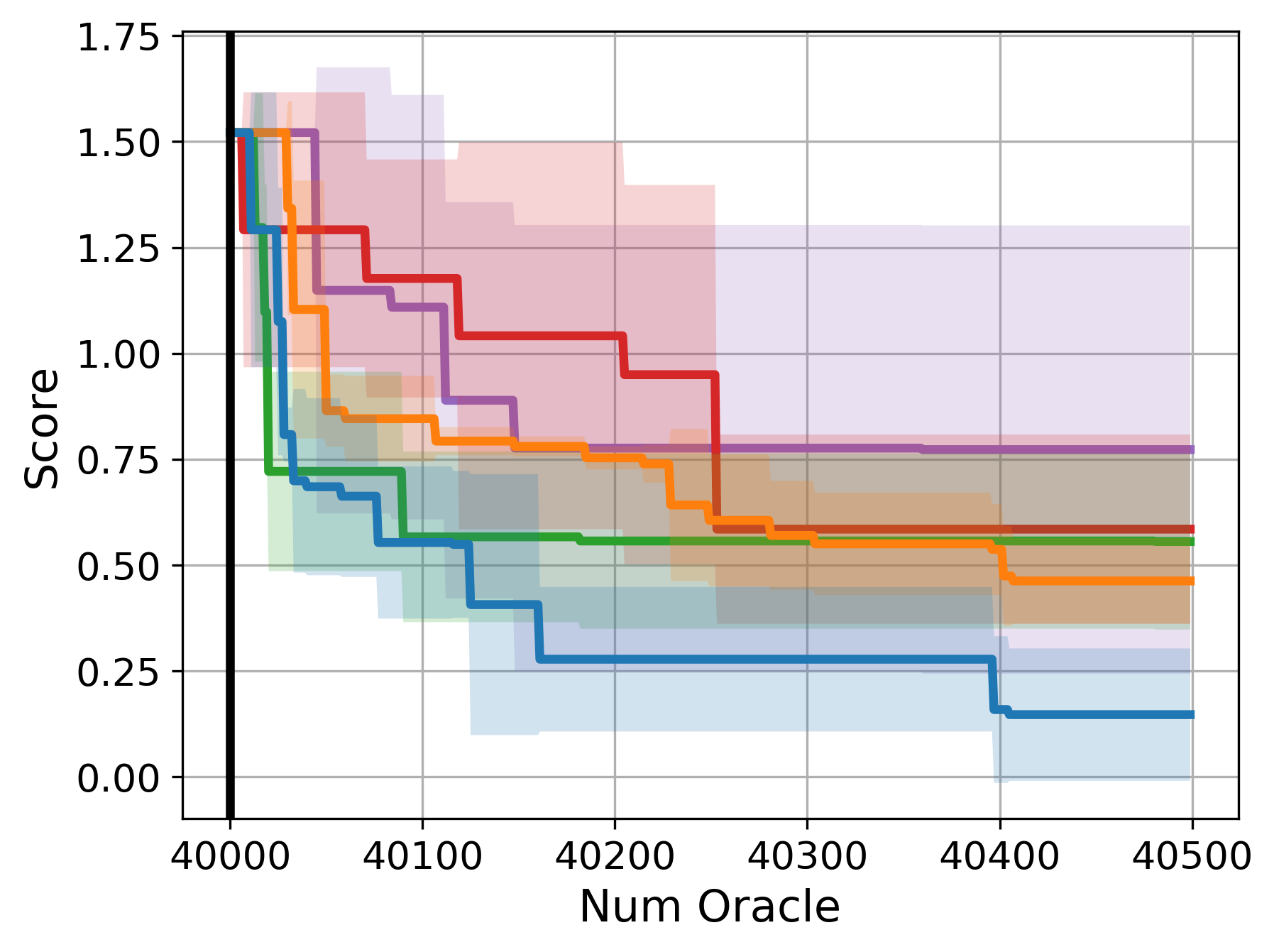}
        \caption{Arithmetic expression}
        \label{subfig:arith}
        \vspace{5pt}
    \end{subfigure}
  \hspace{0\linewidth}
    \begin{subfigure}[h]{0.4\linewidth}
        \centering
        \includegraphics[width=\linewidth]{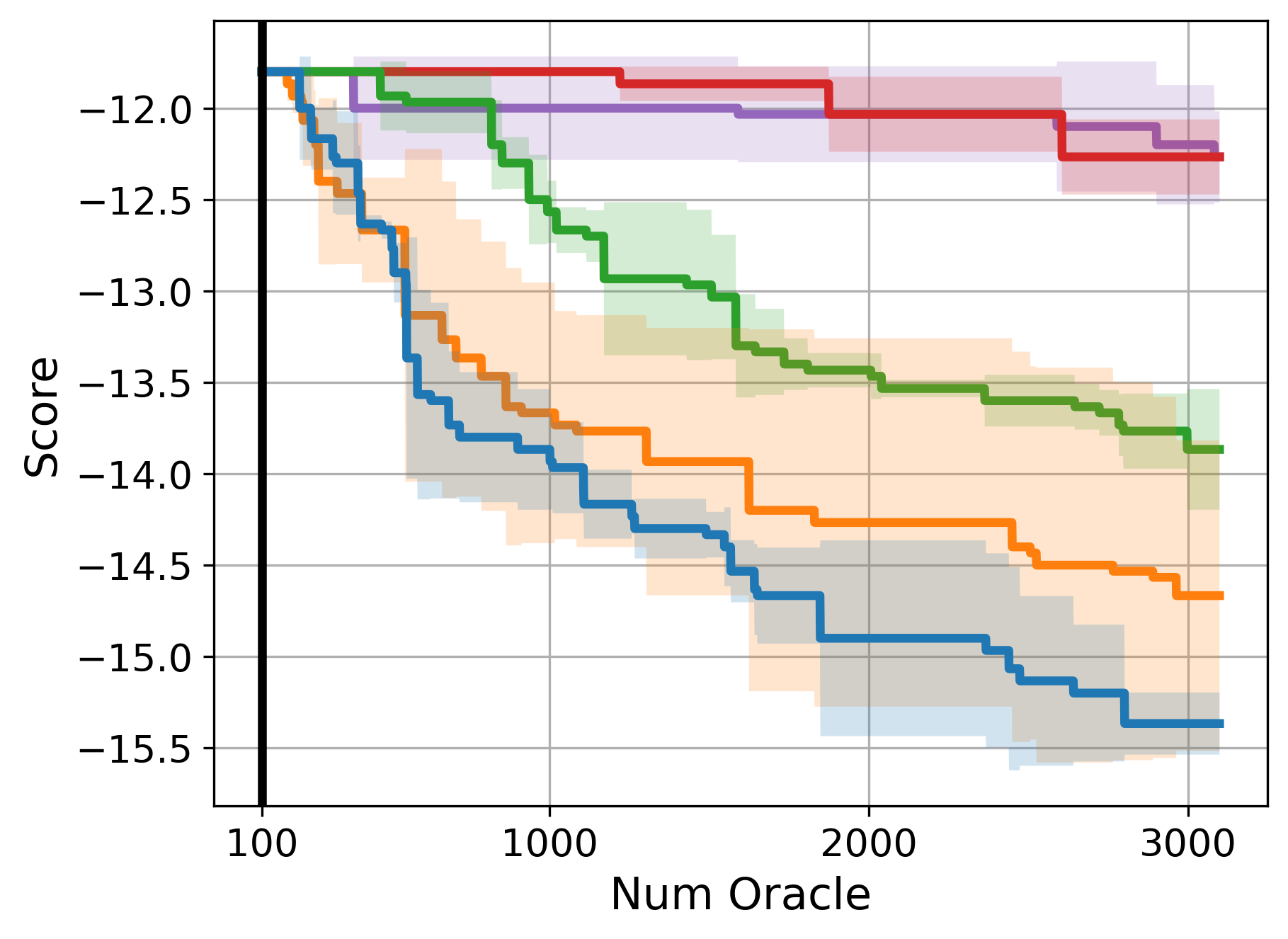}
        \caption{TDC DRD3}
        \label{subfig:drd3}
        \vspace{5pt}
    \end{subfigure}           
\includegraphics[width=\linewidth]{Figures/label_final.jpg}    
    \caption{\textbf{Optimization results with the arithmetic expression and TDC DRD3 benchmark.} The lines and range are the mean and standard deviation of three repetitions with the same parameters. }
    \label{fig:Main_exp2}
\end{figure}

Figure~\ref{fig:Main_exp},~\ref{fig:Main_exp2} represent the graphs that depict the number of oracle calls, \textit{i.e.}, the number of the black-box function evaluations, and the corresponding mean and standard deviation of objective value. 
The maximum number of oracles is set to 70k with Gucamol benchmarks, 3k with the DRD3 benchmark, and 500 with the arithmetic expression task. 
As shown in Figure~\ref{fig:Main_exp}, our method outperformed all the baselines in all the tasks.
Note that the goal of the arithmetic task in Figure~\ref{subfig:arith} and the DRD3 task Figure~\ref{subfig:drd3} is minimizing the objective function, while others aim to maximize it.
In the case of Figure~\ref{subfig:med2} and Figure~\ref{subfig:osmb}, LS-BO and TuRBO-$L$ markedly failed to search for the one that has the best score.
Especially in Figure~\ref{subfig:med2}, LS-BO and TuRBO-$L$, which only search with the fixed latent space, failed to update the initial best point despite searching 70k molecules. 
It implies that the pretrained VAE cannot generate better molecules unless the latent space is updated.
\subsection{Ablation study \label{sec:ablation}}

\begin{table*}[ht]
\begin{minipage}[t]{0.5\linewidth}
    \caption{\textbf{An ablation of CoBO's components.}}
    \label{table:Main_table1}
    \centering
    \begin{tabular}{c|c|c||c}
    \toprule
    $\mathcal{L}_\text{z}$ & $\lambda(y)$ & $\mathcal{L}_\text{Lip}$ & Score \\
    \midrule
    \checkmark & \checkmark & \checkmark & 0.7921 \\ 
    $\times$ & \checkmark & \checkmark & 0.7835 \\ 
    $\times$ & $\times$ & \checkmark & 0.7733 \\
    $\times$ & $\times$ & $\times$ & 0.7504 \\
    \bottomrule
    \end{tabular}
    \label{table: abl1}
\end{minipage}
\begin{minipage}[t]{0.5\linewidth}
    \caption{\textbf{An ablation on trust region recoordination.}}
    \label{table:Main_table2}
    \centering
    \begin{tabular}{c||c}
    \toprule
    Recoordination & Score \\
    \midrule
    \checkmark & 0.7921\\ 
    $\times$   & 0.6844\\
    \bottomrule
    \end{tabular}
    \label{table: abl2}
\end{minipage}
\end{table*}

In this section, we evaluate the main components of our model to analyze their contribution.
We employ Perindopril MPO (pdop) task for our experiment and note that all scores of ablation studies are recorded when the oracle number is 20k to compare each case in limited oracle numbers as the original BO intended.

We ablate the three main components of our CoBO and report the results in Table~\ref{table: abl1}.
As latent space regularization $\mathcal{L}_\text{z}$ comes from Lipschitz regularization and loss weighting schema aims to prioritize penalized input points, we conducted cascading experiments.
Notably, we observed that performance decreased as the components were omitted, and the largest performance dropped (0.0229) when Lipschitz regularization was not employed. 
Table \ref{table: abl2} reports the effectiveness of trust region recoordination.
We observe that applying trust region recoordination makes a considerable difference within a small oracle budget, as the score increases 0.6844 to 0.7921 with recoordination.
\subsection{Analysis on proposed regularizations \label{sec:analysis_loss}}
\label{sec:4.3}
All the analyses were conducted on the Perindopril MPO (pdop) task from the Guacamol benchmark. We include an additional analysis with the arithmetic data.
For convenience, we define the $\mathcal{L}_\text{align}$ as follows:
\begin{equation}
    \label{eq:align_loss}
    \mathcal{L}_{\text{align}} =  
    {\mathcal L}_{\text{Lip}}+
    \mathcal{L}_{\text{z}},
\end{equation}
which is the loss that aligns the latent space with the black-box objective function.
\paragraph{Effects of $\mathcal{L}_{\text{align}}$ on correlation.}
\begin{figure}[t]
\centering
    \begin{subfigure}[h]{0.4\linewidth}
        \centering
        \includegraphics[width=\linewidth]{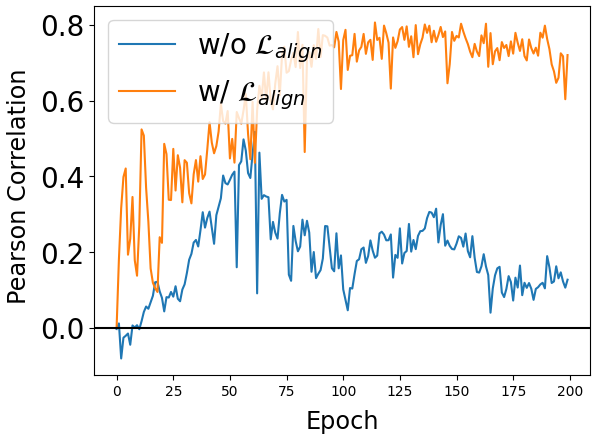}
        \caption{Perindopril MPO (pdop)}
        \label{subfig:pdop_line}
    \end{subfigure}
    \hspace{15pt}
    \begin{subfigure}[h]{0.4\linewidth}
        \centering
        \includegraphics[width=\linewidth]{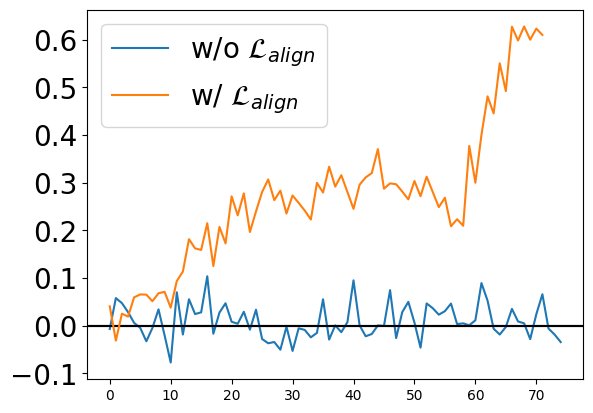}
        \caption{Arithmetic expression}
        \label{{subfig:arit_line}}
    \end{subfigure}
  \caption{\footnotesize 
  \textbf{Effects of $\mathcal{L}_\text{align}$.}
  The plot depicts the Pearson correlation between the distance of the latent vectors and the distance of the objective values over the Bayesian optimization process. Each line represents training with a different loss for the latent space during the optimization process: one with $\mathcal{L}_\text{align}$ (\orange{orange}) and the other without $\mathcal{L}_\text{align}$ (\blue{blue}), where $\mathcal{L}_{\text{align}} = {\mathcal L}_{\text{Lip}}+\mathcal{L}_{\text{z}}$. We measure the correlation after every VAE update.}
  \label{fig:Corr_lineplot_Lipschitz_abl}
\end{figure}
In Section \ref{sec:3.2}, we theoretically prove that $\mathcal{L}_\text{align}$ increases the correlation between the distance in the latent space and the distance within the objective function. Here, we demonstrate our theorem with further quantitative analysis to show correlation changes during the Bayesian optimization process. 
Figure~\ref{fig:Corr_lineplot_Lipschitz_abl} shows Pearson correlation value between the latent space distance $\|\mathbf{z}_i - \mathbf{z}_j\|_2$ and the objective value distance $|y_i - y_j|$. The blue and orange lines indicate the models with and without our align loss $\mathcal{L}_\text{align}$ in Eq. \ref{eq:align_loss}, respectively.
We measure the correlation with $10^3$ data point and every $10^6$ pair. The data is selected as the top $10^3$ points with the highest value among  $10^3$ data, which are from training data for the VAE model and surrogate model.
Over the training process, the Pearson correlation values with our align loss $\mathcal{L}_{\text{align}}$ were overall higher compared to the baseline.
Moreover, $\mathcal{L}_{\text{align}}$  increases the Pearson correlation value high over $0.7$ in Figure \ref{subfig:pdop_line} which is normally regarded as the high correlation value. 
This means align loss increases the correlation between the distance of the latent vectors and the distance of the objective values effectively, leading to narrowing the gap between the latent space and the objective function.
\paragraph{Effects of ${\mathcal L}_{\text{align}}$ on smoothness.}
\begin{figure}[t]
\centering
    \hspace{40pt}
    \begin{subfigure}[h]{0.4\linewidth}
        \centering
        \includegraphics[width=\linewidth]{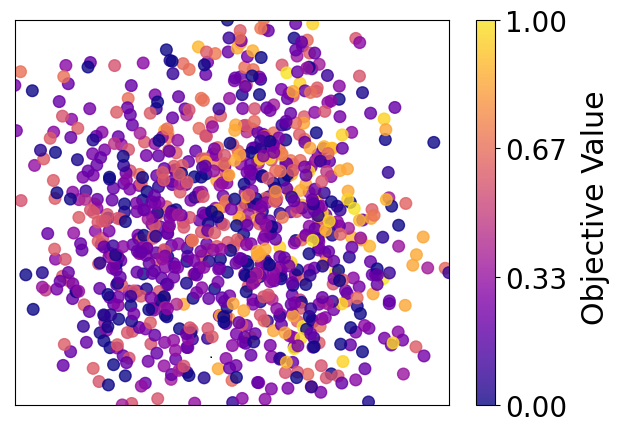}
        \hspace*{-35pt}
        \includegraphics[width=0.7\linewidth]{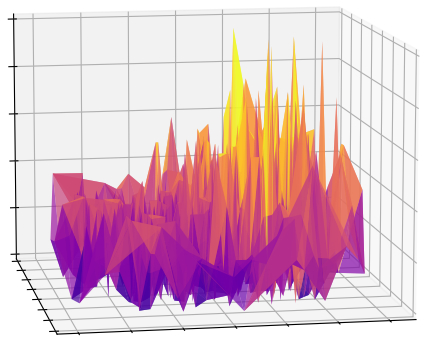}
        \caption{Without $\mathcal{L}_{\text{align}}$}
        \label{subfig:Obj_scatter_Lipschitz_abl1}
    \end{subfigure}
    \hfill
    \begin{subfigure}[h]{0.4\linewidth}
        \centering
        \includegraphics[width=\linewidth]{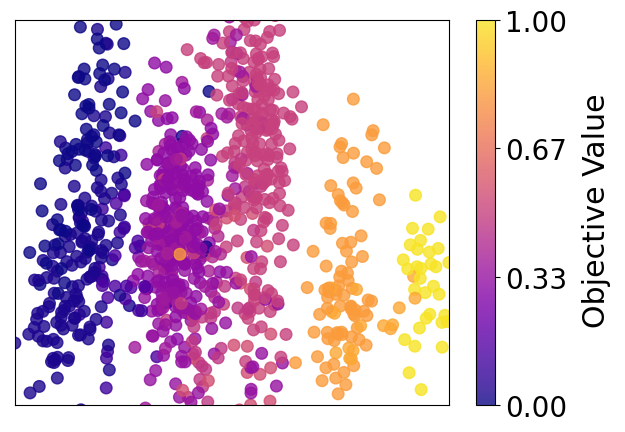}
        \hspace*{-35pt}
        \includegraphics[width=0.7\linewidth]{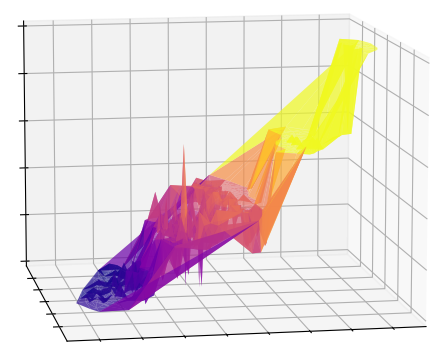}
        \caption{With $\mathcal{L}_{\text{align}}$}
        \label{subfig:Obj_scatter_Lipschitz_abl2}
    \end{subfigure}
    \caption{\textbf{Visualizations on the landscape of latent space for $\mathcal{L}_{\text{align}}$ ablation.} The scatter plot and corresponding 3D plot of the latent vectors with objective values. The landscape becomes much smoother with applying $\mathcal{L}_{\text{align}}$. A colorbar indicates the normalized objective value, where yellow means higher value and purple means lower value.}
    \label{fig:Obj_scatter_Lipschitz_abl}
\end{figure}

To validate that our model encourages the smooth latent space, we qualitatively analyze our model with $\mathcal{L}_\text{align}$ and without $\mathcal{L}_\text{align}$ by visualizing the landscape of the latent space after the training finishes. 
In Figure~\ref{fig:Obj_scatter_Lipschitz_abl}, we visualize the top-$k$ objective value and the corresponding latent vector in 2D space with the 2D scatter plot and the corresponding 3D plot for better understanding.
We reduce the dimension of the latent vector to two dimensions using principal component analysis (PCA). 
The color means the normalized relative objective score value. 
The landscape of objective value according to the latent space with $\mathcal{L}_\text{align}$ (right) is smoother than the case without $\mathcal{L}_\text{align}$ (left).
It demonstrates our $\mathcal{L}_\text{align}$ loss encourages smoothness of the latent space with respect to the objective function.
Note that due to the inherent discreteness of the black-box objective function, it's expected to observe some spaces between the clusters in the 2D plot. 
Still, this does not detract from our purpose of effectively aligning latent vectors with respect to their discrete objective values.
\section{Related Works}
\subsection{Latent space Bayesian optimization}
Latent space Bayesian optimization~\cite{maus2022local,jin2018junction,tripp2020sample,grosnit2021high,gomez2018automatic,stanton2022accelerating,eissman2018bayesian,siivola2021good} aims to resolve the issues in optimization over high-dimensional, or structured input space by introducing Bayesian optimization over latent space. 
As the objective for structured inputs is usually defined over a large, complex space, the challenges can be alleviated by the lower-dimensional and continuous latent space. 
Variational autoencoders~(VAEs)~\cite{kingma2013auto} are commonly leveraged to learn the continuous embeddings for the latent space Bayesian optimizers.
Some prior works propose novel architectures for decoders~\cite{kusner2017grammar,jin2018junction,samanta2020nevae,kajino2019molecular,dai2018syntax}, while others introduce loss functions to improve the surrogate for learning the objective function~\cite{tripp2020sample,grosnit2021high,deshwal2021combining,eissman2018bayesian}.
Note that while the surrogate model~(typically GP) is modeled based on the latent space, it is the input space which the objective value is obtained from. 
Although this results in an inherent gap in latent space Bayesian optimization, many prior works~\cite{jin2018junction,grosnit2021high,gomez2018automatic,stanton2022accelerating,eissman2018bayesian} do not update the generative model for the latent space.
LOL-BO~\cite{maus2022local} seeks to address this gap by adapting the latent space to the GP prior, while \cite{tripp2020sample} suggests periodic weighted retraining to update the latent space.
\subsection{Latent space regularization}
Several previous studies in latent space have focused on learning the appropriate latent space for their tasks by incorporating additional regularizations or constraints alongside the reconstruction loss. 
These approaches have been applied in a wide range of areas, including molecule design~\cite{castro2022transformer,notin2021improving,abeer2022multi}, domain adaptation~\cite{kang2019contrastive,tian2020domain}, semantic segmentation~\cite{barbato2021latent,michieli2021continual}, representation learning~\cite{sinha2021consistency,liu2022learning,sundararaman2022reduced}, and reinforcement learning~\cite{kemertas2021towards}. 
Notably, the Lipschitz constraint is commonly employed to promote smoothness in diverse optimization problems.
For example, \cite{liu2022learning} introduces Lipschitz regularization in learning implicit neural functions to encourage smooth latent space for 3D shapes while \cite{terjek2020adversarial} penalizes the data pairs that violate the Lipschitz constraint in adversarial training. 
Additionally, CoFLO~\cite{zhang2020design} leverages Lipschitz-like regularization in latent space Bayesian optimization.
Our method proposes regularization to close the gap inherent in the latent space Bayesian optimization. 
Concretely, we introduce Lipschitz regularization to increase the correlation between the distance of latent space and the distance of objective value and give more weight to the loss in the promising areas.
\section{Conclusion and Discussion}
In this paper, we addressed the problem of the inherent gap in the latent space Bayesian optimization and proposed Correlated latent space Bayesian Optimization.
We introduce Lipschitz regularization which maximizes the correlation between the distance of latent space and the distance of objective value to close the gap between latent space and objective value. 
Also, we reduced the gap between latent space and input space with a loss weighting scheme, especially in the promising areas.
Additionally, by trust region recoordination, we adjust the trust regions according to the updated latent space. 
Our experiments on various benchmarks with molecule generation and arithmetic fitting tasks demonstrate that our CoBO significantly improves state-of-the-art methods in LBO. 

\textbf{Limitations and broader impacts.}
Given the contribution of this work to molecular design optimization, careful consideration should be given to its potential impacts on the generation of toxic or harmful substances in the design of new chemicals. 
We believe that our work primarily has the positive potential of accelerating the chemical and drug development process, setting a new standard in latent space Bayesian optimization.
\begin{ack}
This work was partly supported by ICT Creative Consilience program (IITP-2023-2020-0-01819) supervised by the IITP, the National Research Foundation of Korea (NRF) grant funded by the Korea government (MSIT) (NRF-2023R1A2C2005373), and Samsung Research Funding \& Incubation Center of Samsung Electronics under Project Number SRFC-IT1701-51.
\end{ack}

\bibliographystyle{unsrt} 
\bibliography{reference}

\clearpage
\appendix

\paragraph{Summary.}
We provide additional experimental results/details and analysis in this supplement as:
(A) analysis on regularization $\mathcal{L}_\text{z}$, (B) the proof of Theorem~\ref{theorem:corr2}, (C) additional results on Guacamol Benchmarks, (D) additional results on DRD3 task, (E) efficiency analysis, and (F) implementation details.

\section{Analysis on Regularization $\mathcal{L}_\text{z}$}
Here, we analyze the necessity of regularization $\mathcal{L}_\text{z}$.
Based on Theorem 1 in the main paper, to increase the correlation between the distance of latent vectors and the differences in their corresponding objective values, we need to keep the distance between the latent vectors $\mathbf{z}$ to be a \textit{constant}.
Figure \ref{fig:Zdist_boxplot_ZRegular_abl} displays the box plot of distances between $\mathbf{z}$ at each iteration of BO.
The box represents the first and third quartiles, and the whiskers represent the 10 and 90 percentiles. Each data point has a top-$k$ score of objective value. 
As in Figure \ref{fig:Zdist_boxplot_ZRegular_abl}, the model only with Lipschitz regularization $\mathcal{L}_\text{Lip}$ (\textit{i.e.}, without $\mathcal{L}_\text{z}$) increases the distance between the latent vectors $\|\mathbf{z}_i - \mathbf{z}_j\|_2$ since it is an easy way to minimize $\mathcal{L}_\text{Lip}$ given as
\begin{equation}
    \mathcal{L}_{\text{Lip}} = \sum_{i,j\le N} \max \left(0, \frac{|y_i - y_j|}{\|\mathbf{z}_i - \mathbf{z}_j\|_2} - L \right).
    \label{eq:lip_repetition}
\end{equation}
However, when applying both regularizations $\mathcal{L}_\text{Lip}$ and $\mathcal{L}_\text{z}$, we observe that the distance is preserved within a certain range, similar to the beginning of training. 
\begin{figure}[ht]
\centering
\includegraphics[width=12cm]{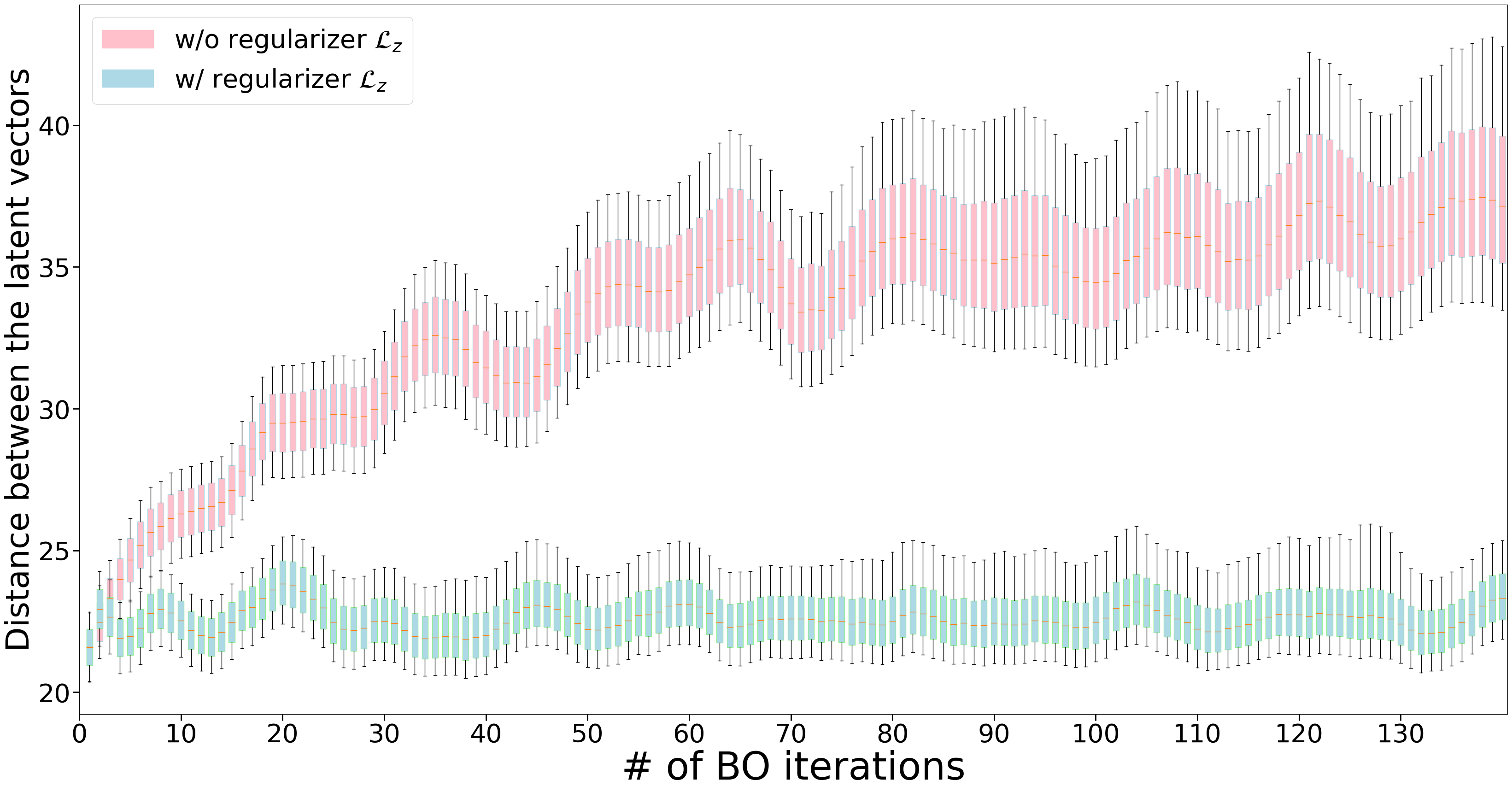}
\caption{\textbf{Effects on Regularization $\mathcal{L}_\text{z}$.} The green and red box plots depict the distances of the latent vectors with and without regularization term $\mathcal{L}_\text{z}$, respectively.}
\label{fig:Zdist_boxplot_ZRegular_abl}
\end{figure}

\section{Proof of Theorem 1}
\addtocounter{theorem}{-1}
\begin{theorem}
\label{theorem:corr2}
Let $\Dz = d_Z(Z_1,Z_2)$ and $\Dy = d_Y(f(Z_1),f(Z_2))$ be random variables where $Z_1, Z_2$ are i.i.d. random variables, $f$ is an $L$-Lipschitz continuous function, and $d_Z, d_Y$ are distance functions.
Then, the correlation between $\Dz$ and $\Dy$ is lower bounded as
\[ \Dy \leq L\Dz \Rightarrow \textnormal{Corr}_{\Dz, \Dy} \geq \frac{\frac{1}{L} (\sigma^2_{\Dy} + \mu^2_{\Dy}) - L\mu^2_{\Dz} }{\sqrt{\sigma^2_{\Dz} \sigma^2_{\Dy}}}, \]
where $\mu_{\Dz}$, $\sigma^2_{\Dz}$, $\mu_{\Dy}$, and $\sigma^2_{\Dy}$ are the mean and variance of $\Dz$ and $\Dy$ respectively.
\end{theorem}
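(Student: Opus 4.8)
The plan is to expand the Pearson correlation as $\text{Corr}_{\Dz,\Dy} = \text{Cov}(\Dz,\Dy)/\sqrt{\sigma^2_{\Dz}\sigma^2_{\Dy}}$ and to bound the covariance from below using only the Lipschitz hypothesis and the nonnegativity of distances. Writing $\text{Cov}(\Dz,\Dy) = \mathbb{E}[\Dz\Dy] - \mu_{\Dz}\mu_{\Dy}$, I would treat the two summands separately and then recombine.

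For the cross-moment term, I would use that the $L$-Lipschitz condition applied to the realized pair $(Z_1,Z_2)$ gives $\Dy \le L\Dz$ pointwise (almost surely), hence $\Dz \ge \Dy/L$; since $\Dy \ge 0$ because it is a distance, multiplying this inequality by $\Dy$ preserves its direction and yields $\Dz\Dy \ge \Dy^2/L$ pointwise. Taking expectations gives $\mathbb{E}[\Dz\Dy] \ge \tfrac{1}{L}\mathbb{E}[\Dy^2] = \tfrac{1}{L}(\sigma^2_{\Dy} + \mu^2_{\Dy})$. For the product-of-means term, taking expectations of $\Dy \le L\Dz$ gives $\mu_{\Dy} \le L\mu_{\Dz}$, and since $\mu_{\Dz} \ge 0$ this yields $\mu_{\Dz}\mu_{\Dy} \le L\mu^2_{\Dz}$. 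Combining, $\text{Cov}(\Dz,\Dy) \ge \tfrac{1}{L}(\sigma^2_{\Dy} + \mu^2_{\Dy}) - L\mu^2_{\Dz}$, and dividing through by the strictly positive quantity $\sqrt{\sigma^2_{\Dz}\sigma^2_{\Dy}}$ preserves the inequality and produces exactly the claimed lower bound.

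The chain is short, so the only points needing care are essentially regularity conditions rather than genuine obstacles: the inequality $\Dy \le L\Dz$ must be used pointwise on the sample space (it is the Lipschitz property for the sampled pair, not merely an inequality in expectation), the nonnegativity of $\Dy$ and of $\mu_{\Dz}$ is what licenses multiplying inequalities by these quantities without flipping signs, and one must implicitly assume $\Dz,\Dy$ have finite second moments so that $\mu_{\Dz},\mu_{\Dy},\sigma^2_{\Dz},\sigma^2_{\Dy},\mathbb{E}[\Dz\Dy]$ are all well defined and $\sqrt{\sigma^2_{\Dz}\sigma^2_{\Dy}}>0$. Notably the i.i.d. assumption on $Z_1,Z_2$ is not needed for the inequalities themselves and only serves to give $\Dz$ and $\Dy$ their intended interpretation as distances between two random draws.
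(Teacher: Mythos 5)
Your proof is correct and follows essentially the same route as the paper's: expand the Pearson correlation via the covariance, use the pointwise inequality $\Dy \le L\Dz$ to bound $\mathbb{E}[\Dz\Dy]$ below by $\tfrac{1}{L}\mathbb{E}[\Dy^2]$ and $\mu_{\Dz}\mu_{\Dy}$ above by $L\mu_{\Dz}^2$, then divide by $\sqrt{\sigma^2_{\Dz}\sigma^2_{\Dy}}$. Your explicit attention to the nonnegativity of $\Dy$ and $\mu_{\Dz}$ (needed to multiply the inequalities without flipping signs) and to finite second moments is a welcome tightening of steps the paper leaves implicit.
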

\begin{proof}
The correlation between $\Dz$ and $\Dy$ is:
\begin{align}
\text{Corr}_{\Dz, \Dy}&=\frac{\text{Cov}\left(\Dz, \Dy\right)}{\sqrt{\text{Var}\left(\Dz \right) \text{Var}(\Dy)}}
\\
&=\frac{\mathbb{E}[(\Dz - \mathbb{E}[\Dz]) (\Dy - \mathbb{E}[\Dy])]}{\sqrt{\text{Var}\left(\Dz \right) \text{Var}(\Dy)}}
\\
&=\frac{\mathbb{E}[\Dz \Dy] - \mathbb{E}[\Dz]\mathbb{E}[\Dy]}{\sqrt{\text{Var}\left(\Dz \right) \text{Var}(\Dy)}}.
\end{align}
By $L$-Lipschitz continuity, we have:
\begin{align}
    d_Y(f(Z_1), f(Z_2)) \leq L d_Z(Z_1, Z_2) \Rightarrow \Dy \leq L\Dz.
\end{align}
Hence, the correlation is bounded as follows:

\begin{align}
\text{Corr}_{\Dz, \Dy}&=\frac{\mathbb{E}[\Dz \Dy] - \mathbb{E}[\Dz]\mathbb{E}[\Dy]}{\sqrt{\text{Var}\left(\Dz\right) \text{Var}(\Dy)}}
\\
&\geq 
\frac{ \mathbb{E}[\frac{1}{L}\Dy \Dy] - \mathbb{E}[\Dz]\mathbb{E}[L\Dz]}{\sqrt{\text{Var}\left(\Dz\right) \text{Var}(\Dy)}}
\\
&=\frac{\frac{1}{L} \mathbb{E}[(\Dy)^2] - L \mathbb{E}[\Dz]\mathbb{E}[\Dz]}{\sqrt{\text{Var}\left(\Dz\right) \text{Var}(\Dy)}}
\\
&=\frac{\frac{1}{L}\left(\text{Var}[\Dy] + (\mathbb{E}[\Dy])^2\right) - L (\mathbb{E}[\Dz])^2}{\sqrt{\text{Var}\left(\Dz\right) \text{Var}(\Dy)}}
\\
&=\frac{\frac{1}{L} (\sigma^2_{\Dy} + \mu^2_{\Dy}) - L\mu^2_{\Dz} }{\sqrt{\sigma^2_{\Dz} \sigma^2_{\Dy}}}.
\end{align}
\end{proof}
\section{Additional Results on Guacamol Benchmarks}
In addition to the four tasks of the Guacamol benchmark that we previously mentioned, we also evaluate our model on three additional tasks: Ranolazine MPO, Aripiprazole similarity, and Valsartan SMART. The experimental settings for these additional tasks are the same with the settings applied to the initial four tasks. The results of the experiments are present in Figure~\ref{fig:Main_exp_add}. For the Valsartan SMART task, as depicted in Figure~\ref{subfig:valt}, three models find the optimal point, note that our model finds the optimal point faster than other models.
\begin{figure}[t]
\centering
    \hspace{40pt}
  \hspace{-0.12\linewidth}
      \begin{subfigure}[h]{0.4\linewidth}
        \centering
        \includegraphics[width=\linewidth]{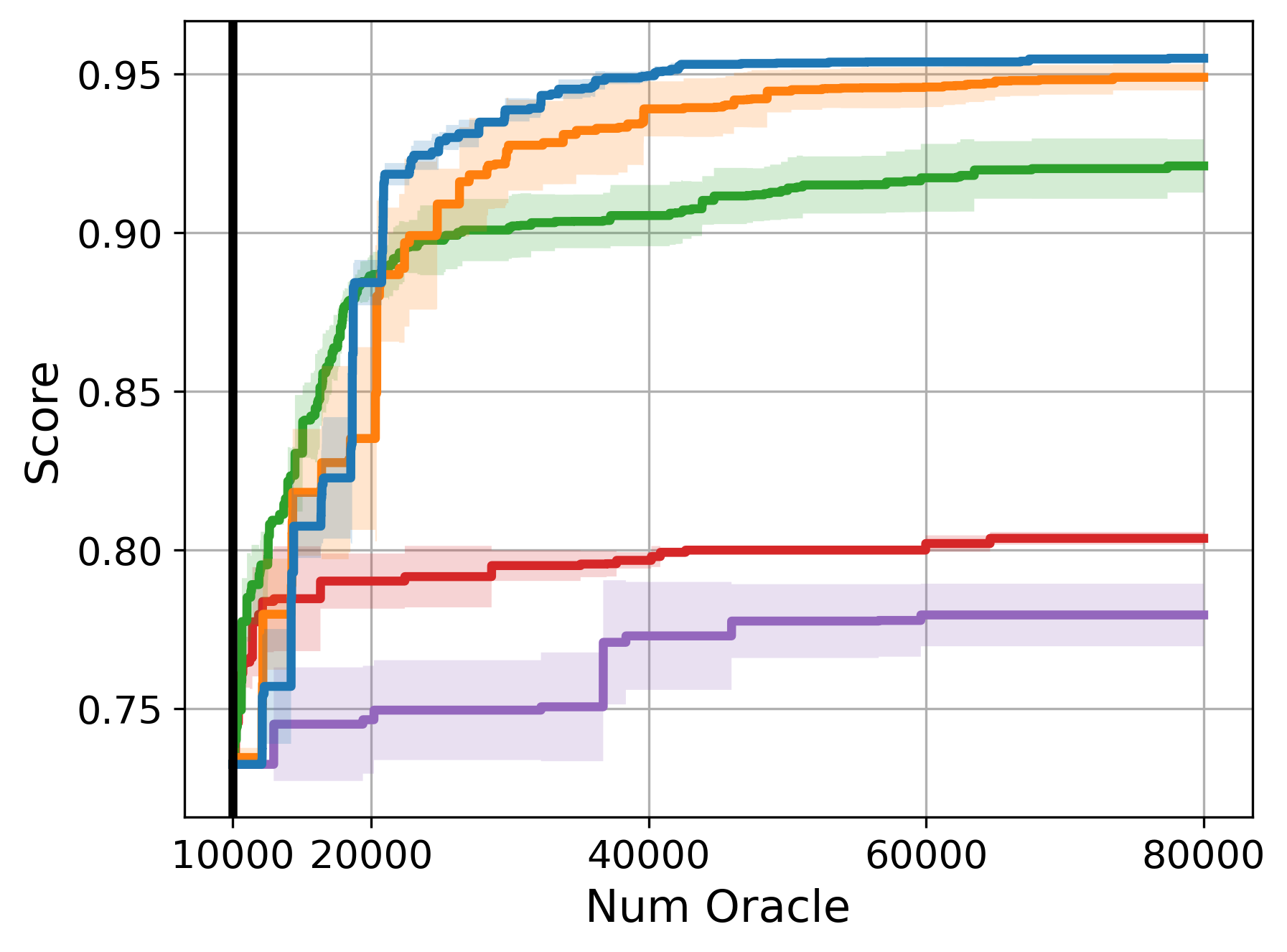}
        \caption{Ranolazine MPO (rano)}
        \label{subfig:rano}
    \end{subfigure}
    \vspace{2mm}
  \hspace{0\linewidth}
    \begin{subfigure}[h]{0.4\linewidth}
        \centering
        \includegraphics[width=\linewidth]{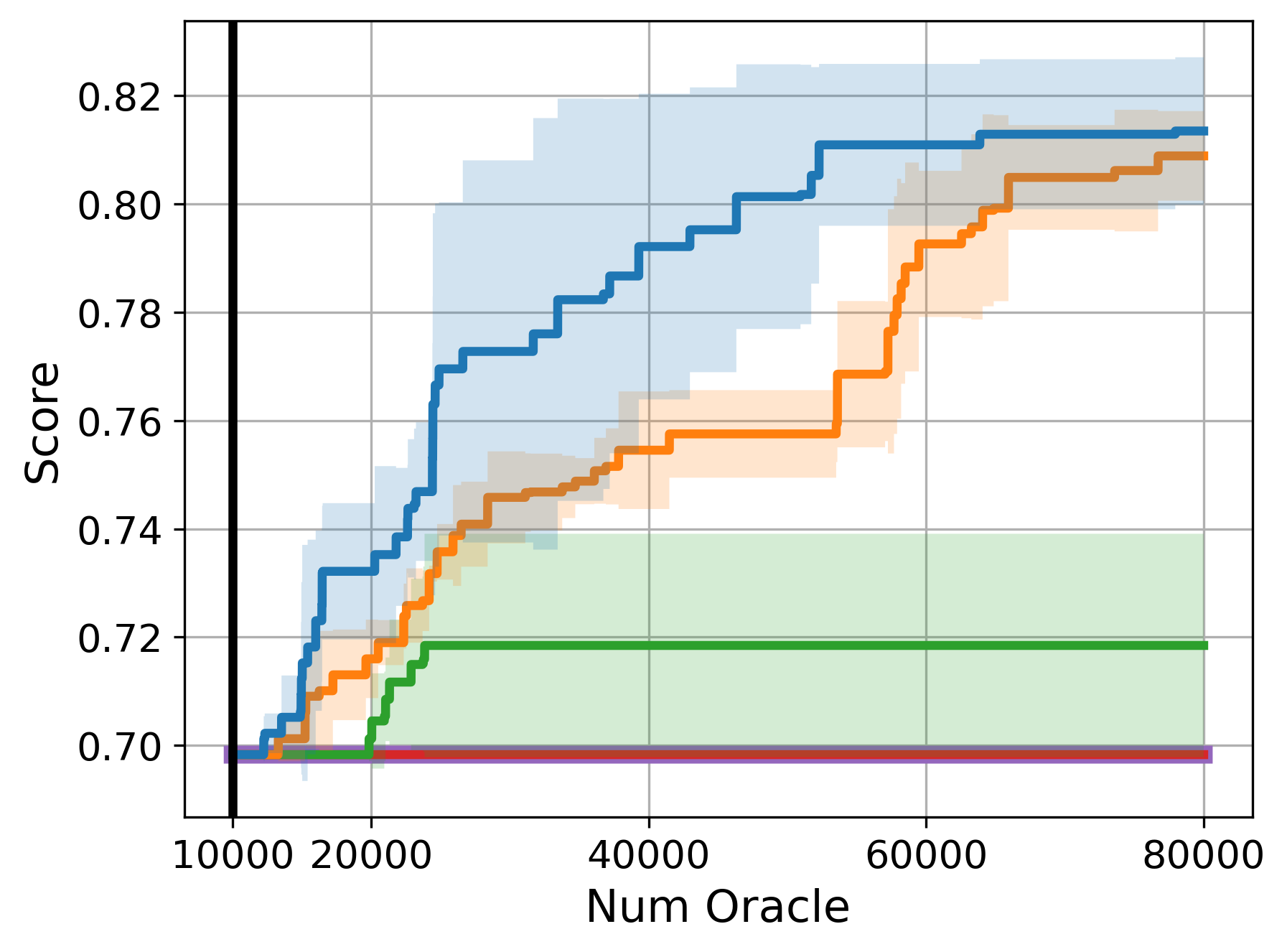}
        \caption{Aripiprazole similarity (adip)}
        \label{subfig:adip}
    \end{subfigure}
  \hspace{0\linewidth}
    \begin{subfigure}[h]{0.4\linewidth}
        \centering
        \includegraphics[width=\linewidth]{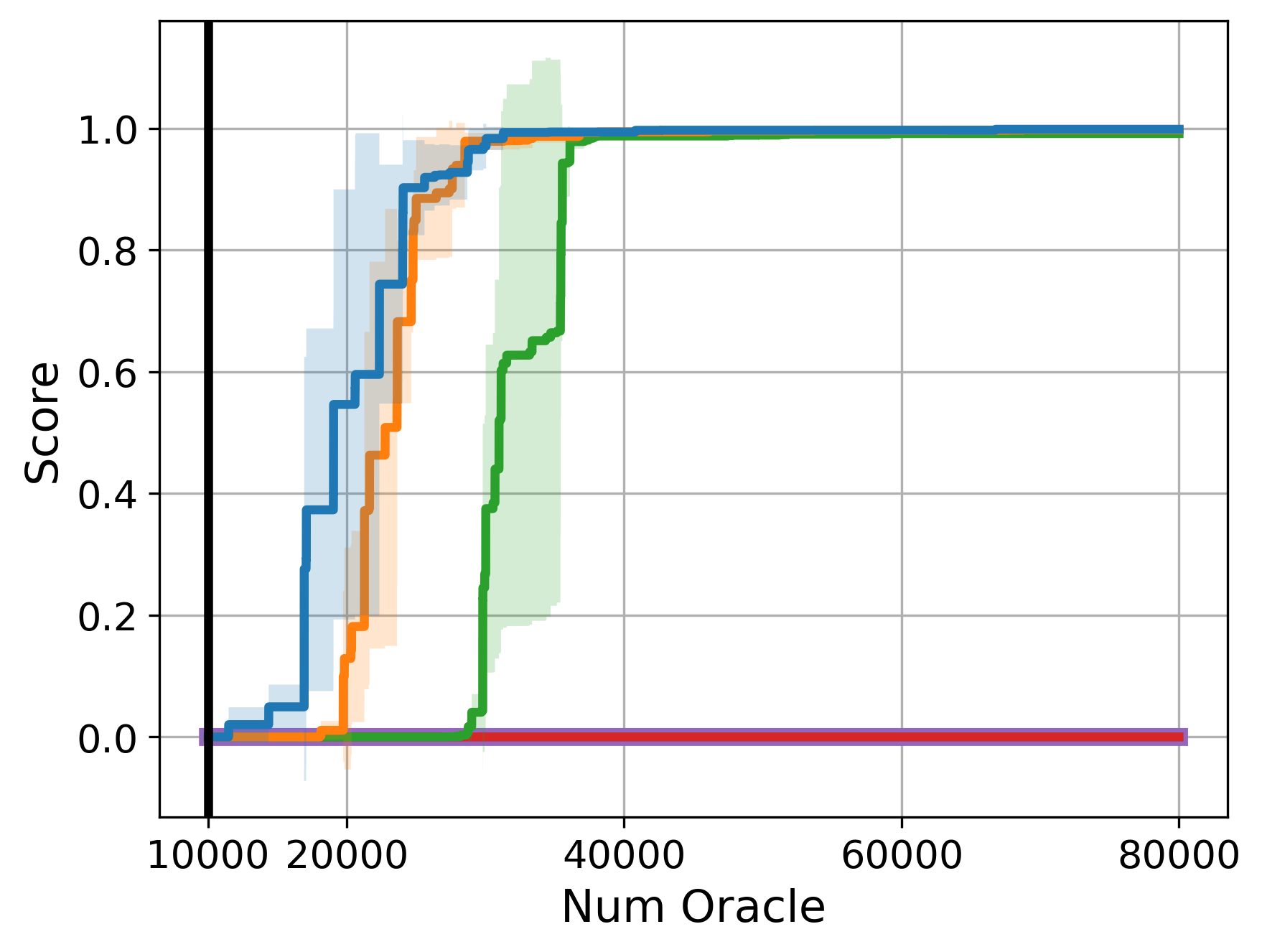}
        \caption{Valsartan SMART (valt)}
        \label{subfig:valt}
        \vspace{5pt}
    \end{subfigure}
\includegraphics[width=\linewidth]{Figures/label_final.jpg}
    \caption{\textbf{Optimization results with the additional three tasks on the Guacamol benchmark.} The lines and range are the mean and standard deviation of three repetitions with the same parameters.}
    \label{fig:Main_exp_add}
\end{figure}

\section{Additional Results on DRD3 Task}
We compare our results with the leaderboard\footnote{\url{https://tdcommons.ai/benchmark/docking_group/drd3/}} of the DRD3 task in Table~\ref{table:DRD3_table2}. 
Note that we use a random initialized dataset of 100. We specifically compare the Top-1 scores as absolute values, which are also reported in our line plot.
\begin{table}[ht]
\caption{Optimization results with best score on TDC DRD3 task. Baselines are reported on leaderboard.} \label{table:DRD3_table2}
\centering
    \vspace{6pt}
\begin{adjustbox}{width=1.0\textwidth}
\begin{tabular}{c|cccccc}
            Oracle calls  & CoBO (Ours)   & Graph-GA\cite{jensen2019graph}   & SMILES-LSTM\cite{segler2018generating}   & GCPN\cite{you2018graph} & 	MARS\cite{xiemars}   & MolDQN\cite{zhou2019optimization} \\ 
\midrule
100                    & \textbf{-11.80} & -11.13 & -11.77 & -9.10  & -7.02  & -11.63 \\ 
500                    & \textbf{-13.57} & -12.50 & -11.37 & -11.97 & -9.83  & -7.62 \\ 
1000                   & \textbf{-13.97} & -13.23 & -11.97 & -12.03 & -11.10 & -7.80 \\ 
3000                   & \textbf{-15.37} & - & - & - & - & - \\ 
\end{tabular}
\end{adjustbox}

\end{table}

\section{Efficiency Analysis}
We conduct an efficiency analysis on every tasks: the Guacamol benchmarks, the DRD3 task, and the arithmetic fitting task. In our analysis, we compare our model with four baseline models.
For a fair comparison, we set up experiments for every model in the same condition, as we use the CPU of AMD EPYC 7742 with a single NVIDIA RTX 2080 TI.
Note that since these are CPU-intensive tasks, the CPU is crucial to the speed of execution.
We report runtimes, the found best score, and the number of oracles, where we measure runtimes of executing a certain number of oracles as the wall clock time.
The number of oracle calls increases each time as unique inputs are passed to the black-box objective function.
Table~\ref{table:runtime_guacamol}, \ref{table:runtime_drd3}, \ref{table:runtime_arith} demonstrates that CoBO achieves comparable runtime with the same number of oracle calls, while outperforming the baselines by finding superior solutions.
\newcommand{\gc}{\cellcolor[rgb]{0.899,0.899,0.899}}

\begin{table*}[ht]
\caption{\textbf{Efficiency comparison on Guacamol benchmarks within 70k evaluation budget.}}
\label{table:runtime_guacamol}
\centering
\begin{tabular}{rc|ccccc}
 & Model  & CoBO (Ours)   & LOL-BO   & W-LBO   & TuRBO-$L$ & LS-BO  \\ 
\cmidrule{2-7}
    \multirow{6}{*}{med2$\left\{\begin{array}{c}\mbox{}\\\mbox{}\\\mbox{}\\\mbox{}\\ \mbox{}\\ \mbox{}\end{array}\right.$}
    & \gc Oracle calls            & \gc 70k & \gc 70k & \gc 70k & \gc 70k & \gc 70k   \\
    & Wall clock time (min)   & 1057.8 & 1080.4 & \textbf{175.8} & 246.7 & 1580.3 \\
    & Found Best Score        & \textbf{0.3828} & 0.3530 & 0.3118 & 0.3118 & 0.3464  \\ 
\cmidrule{2-7}
    & Oracle calls            & 55k & 33k & \textbf{70k} & 48k & 16k \\
    & \gc Wall clock time (min)   & \gc 175.8 & \gc 175.8 & \gc 175.8 & \gc 175.8 & \gc 175.8         \\
    & Found Best Score        & \textbf{0.3828} & 0.3434 & 0.3118 & 0.3118 & 0.3295       \\ 
 
\cmidrule{2-7}

\cmidrule{2-7}
    \multirow{6}{*}{adip$\left\{\begin{array}{c}\mbox{}\\\mbox{}\\\mbox{}\\\mbox{}\\ \mbox{}\\ \mbox{}\end{array}\right.$}
    & \gc Oracle calls            & \gc 70k & \gc 70k & \gc 70k & \gc 70k & \gc 70k   \\
    & Wall clock time (min)   & 4986.3 & 3340.7 & \textbf{198.2} & 236.3 & 1320.8 \\
    & Found Best Score        & \textbf{0.8133} & 0.8086 & 0.6983 & 0.6983 & 0.7186  \\ 
\cmidrule{2-7}
    & Oracle calls            & 32k & 28k & \textbf{70k} & 58k & 13k \\
    & \gc Wall clock time (min)   & \gc 198.2 & \gc 198.2 & \gc 198.2 & \gc 198.2 & \gc 198.2         \\
    & Found Best Score        & \textbf{0.7921} & 0.7466 & 0.6983 & 0.6983 & 0.7186       \\ 
 
\cmidrule{2-7}

\cmidrule{2-7}
    \multirow{6}{*}{pdop$\left\{\begin{array}{c}\mbox{}\\\mbox{}\\\mbox{}\\\mbox{}\\ \mbox{}\\ \mbox{}\end{array}\right.$}
    & \gc Oracle calls            & \gc 70k & \gc 70k & \gc 70k & \gc 70k & \gc 70k   \\
    & Wall clock time (min)   & 1020.9 & 1920.4 & \textbf{168.4} & 268.1 & 840.6 \\
    & Found Best Score        & \textbf{0.8343} & 0.7959 & 0.5855 & 0.5736 & 0.6514  \\ 
\cmidrule{2-7}
    & Oracle calls            & 33k & 28k & \textbf{70k} & 38k & 18k \\
    & \gc Wall clock time (min)   & \gc 168.4 & \gc 168.4 & \gc 168.4 & \gc 168.4 & \gc 168.4         \\
    & Found Best Score        & \textbf{0.8343} & 0.7948 & 0.5855 & 0.5233 & 0.6312       \\ 
 
\cmidrule{2-7}

\cmidrule{2-7}
    \multirow{6}{*}{rano$\left\{\begin{array}{c}\mbox{}\\\mbox{}\\\mbox{}\\\mbox{}\\ \mbox{}\\ \mbox{}\end{array}\right.$}
    & \gc Oracle calls            & \gc 70k & \gc 70k & \gc 70k & \gc 70k & \gc 70k   \\
    & Wall clock time (min)   & 3940.5 & 2820.9 & 340.6 & \textbf{276.1} & 2320.6 \\
    & Found Best Score        & \textbf{0.9550} & 0.9468 & 0.8045 & 0.7766 & 0.9226  \\ 
\cmidrule{2-7}
    & Oracle calls            & 42k & 41k & 55k & \textbf{70k} & 21k \\
    & \gc Wall clock time (min)   & \gc 276.1 & \gc 276.1 & \gc 276.1 & \gc 276.1 & \gc 276.1         \\
    & Found Best Score        & \textbf{0.9486} & 0.9433 & 0.8045 & 0.7766 & 0.9166       \\ 
 
\cmidrule{2-7}

\cmidrule{2-7}
    \multirow{6}{*}{valt$\left\{\begin{array}{c}\mbox{}\\\mbox{}\\\mbox{}\\\mbox{}\\ \mbox{}\\ \mbox{}\end{array}\right.$}
    & \gc Oracle calls            & \gc 70k & \gc 70k & \gc 70k & \gc 70k & \gc 70k   \\
    & Wall clock time (min)   & 560.3 & 760.5 & 304.1 & \textbf{234.2} & 1940.4 \\
    & Found Best Score        & \textbf{0.9982} & \textbf{0.9982} & 4e-14 & 4e-33 & 0.9917  \\ 
\cmidrule{2-7}
    & Oracle calls            & 51k & 38k & 57k & \textbf{70k} & 23k \\
    & \gc Wall clock time (min)   & \gc 234.2 & \gc 234.2 & \gc 234.2 & \gc 234.2 & \gc 234.2         \\
    & Found Best Score        & \textbf{0.9982} & 0.9942 & 4.8532e-14 & 4875e-36 & 0.6533       \\ 
 
\cmidrule{2-7}

\cmidrule{2-7}
    \multirow{6}{*}{zale$\left\{\begin{array}{c}\mbox{}\\\mbox{}\\\mbox{}\\\mbox{}\\ \mbox{}\\ \mbox{}\end{array}\right.$}
    & \gc Oracle calls            & \gc 70k & \gc 70k & \gc 70k & \gc 70k & \gc 70k   \\
    & Wall clock time (min)   & 374.7 & 1320.2 & 366.7 & \textbf{150.4} & 840.5 \\
    & Found Best Score        & \textbf{0.7733} & 0.7521 & 0.6024 & 0.5142 & 0.6366  \\ 
\cmidrule{2-7}
    & Oracle calls            & 46k & 24k & 35k & \textbf{70k} & 11k \\
    & \gc Wall clock time (min)   & \gc 150.4 & \gc 150.4 & \gc 150.4 & \gc 150.4 & \gc 150.4         \\
    & Found Best Score        & \textbf{0.7733} & 0.7415 & 0.5633 & 0.5142 & 0.5833       \\ 
 
\cmidrule{2-7}

\cmidrule{2-7}
    \multirow{6}{*}{osmb$\left\{\begin{array}{c}\mbox{}\\\mbox{}\\\mbox{}\\\mbox{}\\ \mbox{}\\ \mbox{}\end{array}\right.$}
    & \gc Oracle calls            & \gc 70k & \gc 70k & \gc 70k & \gc 70k & \gc 70k   \\
    & Wall clock time (min)   & 477.6 & 840.1 & 372.2 & \textbf{210.2} & 743.4 \\
    & Found Best Score        & \textbf{0.9267} & 0.9233 & 0.8336 & 0.8481 & 0.8933  \\ 
\cmidrule{2-7}
    & Oracle calls            & 59k & 43k & 46k & \textbf{70k} & 19k \\
    & \gc Wall clock time (min)   & \gc 210.2 & \gc 210.2 & \gc 210.2 & \gc 210.2 & \gc 210.2         \\
    & Found Best Score        & \textbf{0.9233} & 0.9167 & 0.8332 & 0.8481 & 0.8866       \\ 
 
\cmidrule{2-7}

\end{tabular}
\end{table*}

\begin{table*}[ht]
\caption{\textbf{Efficiency comparison on DRD3 benchmark within 3k evaluation budget.}}
\label{table:runtime_drd3}
\centering
\begin{tabular}{c|ccccc}
 Model  & CoBO (Ours)   & LOL-BO   & W-LBO   & TuRBO-$L$ & LS-BO  \\ 
\midrule
    \gc Oracle calls            & \gc 3k & \gc 3k & \gc 3k & \gc 3k & \gc 3k    \\
    Wall clock time (hr)   & 86.4 & 65.7 & 40.7 & \textbf{34.3} & 67.4 \\
    Found Best Score        & \textbf{-15.4} & -14.6 & -12.3 & -12.2 & -13.9  \\ 
\midrule
    Oracle calls            & 1.5k & 1.5k & 2.6k & \textbf{3k} & 2.1k \\
    \gc Wall clock time (hr)   & \gc 34.3 & \gc 34.3 & \gc 34.3 & \gc 34.3 & \gc 34.3         \\
    Found Best Score        & \textbf{-14.5} & -14.2 & -12.3 & -12.2 & -13.6       \\ 
\end{tabular}
\end{table*}

\begin{table*}[ht]
\caption{\textbf{Efficiency comparison on arithmetic expression fitting task within 500 evaluation budget.}}
\label{table:runtime_arith}
\centering
\begin{tabular}{c|ccccc}
 Model  & CoBO (Ours)   & LOL-BO   & W-LBO   & TuRBO-$L$ & LS-BO  \\ 
\midrule
    \gc Oracle calls            & \gc 500 & \gc 500 & \gc 500 & \gc 500 & \gc 500   \\
    Wall clock time (min)   & \textbf{5.4} & 11.3 & 11.1 & 338.1 & 1620.7 \\
    Found Best Score        & \textbf{0.1468} & 0.4624 & 0.5848 & 0.7725 & 0.5533  \\ 
\midrule
    Oracle calls            & \textbf{500} & 330 & 173 & 0 & 0 \\
    \gc Wall clock time (min)   & \gc 5.4 & \gc 5.4 & \gc 5.4 & \gc 5.4 & \gc 5.4         \\
    Found Best Score        & \textbf{0.1468} & 0.5467 & 1.0241 & 1.521 & 1.521       \\ 
\end{tabular}
\end{table*}
\section{Implementation Details}
In our implementation, we use PyTorch\footnote{Copyright (c) 2016-Facebook, Inc (Adam Paszke), Licensed under BSD-style license}, BoTorch\footnote{Copyright (c) Meta Platforms, Inc. and affiliates. Licensed under MIT License} and GPyTorch\footnote{Copyright (c) 2017 Jake Gardner. Licensed under MIT License}. Additionally, we utilize the codebase\footnote{Copyright (c) 2022 Natalie Maus. Licensed under MIT License} of \cite{maus2022local} for the implementation.
The SELFIES VAE is pretrained with 1.27M molecules in Guacamol benchmark and DRD3 task from \cite{brown2019guacamol} and the Grammar VAE is pretrained 40K expression in Arithmetic data from \cite{grosnit2021high}. 
On the DRD3 task, we modify the evaluation metric from minimization to maximization by simply changing the sign of the objective values. 
In our experiments, we mainly employ NVIDIA V100 and Intel Xeon Gold 6230. In this setup, the pdop tasks with a budget of 70k oracle, took an average of 11 hours.

\subsection{Hyperparameters}
We grid search coefficients of our proposed regularizations $\mathcal{L}_{\text{Lip\_W}}$ and $\mathcal{L}_{\text{z}}$, in the range of [10,100,1000] for $\mathcal{L}_{\text{Lip\_W}}$ and [0.1,1] for $\mathcal{L}_{\text{z}}$. For some tasks, we didn't search for these hyperparameters, and their coefficients are provided in Table~\ref{table:coef2}. The selected coefficients from this search are presented in Table~\ref{table:coef}. For other hyperparameters, such as coefficients for other losses, batch size, and learning rate, we set values according to Table~\ref{table:task}.
\begin{table*}[ht]
\caption{\textbf{Coefficients of our proposed regularizations determined by grid search.}} \label{table:coef}
\centering
\begin{tabular}{c|cccc|c|c}
    & med2   & osmb  & pdop  & zale    & Arithmetic   & DRD3 \\ 
    \midrule
    Coefficient of $\mathcal{L}_{\text{Lip\_W}}$    & 1e3  & 1e2 & 1e2  & 1e3 & 1e1  & 1e1    \\ 
    Coefficient of $\mathcal{L}_{\text{z}}$         & 1e0  & 1e0 & 1e-1 & 1e0 & 1e-1 & 1e0  \\ 
\end{tabular}
\end{table*}

\begin{table*}[ht]
\caption{\textbf{Coefficients of our proposed regularizations w/o search.}} \label{table:coef2}
\centering
\begin{tabular}{c|cccc|c|c}
             & rano & adip & valt \\ 
\midrule
    Coefficient of $\mathcal{L}_{\text{Lip\_W}}$    & 1e2  &  1e2  &  1e2       \\ 
    Coefficient of $\mathcal{L}_{\text{z}}$         & 1e-1 &  1e-1 &  1e-1    \\ 
\end{tabular}
\end{table*}

\begin{table*}[ht]
\caption{\textbf{Other hyperparameters used in the experiments.}} \label{table:task}
\centering
\begin{tabular}{c|c|c|c}
  Parameter & Guacamol & Arithmetic & DRD3 \\ 
\midrule
  Learning rate & 0.1 & 0.1 & 0.1 \\
  Coefficient of $\mathcal{L}_{\text{surr}}$ & 1 & 1 & 1 \\
  Coefficient of $\mathcal{L}_{\text{recon\_W}}$ & 1 & 1 & 1 \\
  Coefficient of $\mathcal{L}_{\text{KL}}$ & 0.1 & 0.1 & 0.1 \\
  Quantile of objective value for loss weighting & 0.95 & 0.95 & 0.95 \\
  Standard deviation $\sigma$ for loss weighting & 0.1 & 0.1 & 0.1 \\
  $\#$ initial datapoints $N$ & 10000 & 40000 & 100 \\
  Latent update interval $N_\text{fail}$ & 10 & 10 & 10 \\
  Batch size & 10 & 5 & 1 \\
  $\#$ top-$k$ used training & 1000 & 10 & 10 \\
\end{tabular}
\end{table*}

\end{document}